\newcommand{\WA}{\mathbb{E}[W_A]}
\newcommand{\WB}{\mathbb{E}[W_B]}
\newcommand{\textrmrm}{\textnormal}
\newcommand{\textrmit}{\textnormal\textit}
\newcommand{\textrmtt}{\textnormal\texttt}
\newcommand{\textrmbf}{\textnormal\textbf}
\DeclareMathOperator*{\argmax}{arg\,max}
\newtheorem{theorem}{Theorem}
\newtheorem{lemma}{Lemma}
\renewcommand{\ALG@beginalgorithmic}{\small}
\title{DiSC: Differential Spectral Clustering of Features
%
}
\author{
  Ram Dyuthi Sristi, Gal Mishne \\
  UC San Diego \\
  La Jolla, CA, USA\\
  \textrmtt{\{rsristi, gmishne\}@ucsd.edu} \\
   \And
  Ariel Jaffe \\
  Hebrew University \\
  Jerusalem, Israel\\
  \textrmtt{ariel.jaffe@mail.huji.ac.il} \\
}
\begin{document}
\maketitle

\begin{abstract}
  Selecting subsets of features that differentiate between two conditions is a key task in a broad range of scientific domains. In many applications, the features of interest form clusters with similar effects on the data at hand. To recover such clusters we develop DiSC, a data-driven approach for detecting groups of features that differentiate between conditions. For each condition, we construct a graph whose nodes correspond to the features and whose weights are functions of the similarity between them for that condition. We then apply a spectral approach to compute subsets of nodes whose connectivity differs significantly between the condition-specific feature graphs. On the theoretical front, we analyze our approach with a toy example based on the stochastic block model. We evaluate DiSC on a variety of datasets, including MNIST, hyperspectral imaging, simulated scRNA-seq and task fMRI, and demonstrate that DiSC uncovers features that better differentiate between conditions compared to competing methods.
\end{abstract}

\section{Introduction}

Detecting variables or features that separate between two or more conditions is a critical task in many scientific domains. Often, the separation between conditions is caused by a large number of strongly dependent features that form one or more differentiating clusters. Recovering those clusters can provide insights into the data and its underlying mechanisms. 
Strongly dependent differentiating features are common, for example, in computational biology. In transcriptome analysis, the data consists of the gene expression of multiple cells. Often, the cells correspond to two or more biological conditions, such as various cell types, the existence of a particular disease, response to medical treatment, or two-time points in an evolutionary process \citep{zhao2021detection}. 
Uncovering groups of differentiating genes (also known as \textrmit{pathways}) may significantly contribute to our understanding of biological processes happening in one condition and not in the other. Differential feature grouping has various other interesting applications in neuroscience~\citep{xiao2018alternating}, computer vision and machine learning \citep{daudt2018urban}. In all applications, detecting groups that are significant for differentiating between conditions adds interpretability to the otherwise black box models.

Identifying differential features has been addressed in various contexts. Here we briefly review two main settings: 1) grouping of features with a similar contribution to a prediction task, and 2) data fusion - identifying shared features in datasets from different sources or views. 
In the context of feature selection, regularization based methods are a popular approach. Most of these methods assume a linear regression model over the features to predict a function over the data points, e.g., class label. To select only meaningful features, a regularizer is added to the fidelity term. 
For imposing a sparse solution, 
classical LASSO relaxes the $l_0$ norm  of the coefficient vector with its convex surrogate $l_1$ \citep{elad2010sparse,tibshirani1996regression}. 
Alternatively, a recent approach aims to  approximate the $l_0$ regularizer via stochastic gates \citep{yamada2020feature}.
The sparsity induced by these methods makes them problematic for detecting groups of dependent features since they typically yield only a small number of features out of a large dependent group.

To address the case of correlated features, various extensions of LASSO were developed.
Clustered LASSO \citep{she2008sparse} promotes sparsity among the coefficients and equal significance to non-zero coefficients in the same group.  
Ordered Weighted L1 regularization (OWL) \citep{bogdan2013statistical} penalizes the coefficients in the order of their magnitude, i.e., the higher the absolute value of the coefficient, the higher the penalty that will be imposed. This induces equal coefficients to the correlated features \citep{figueiredo2016ordered}. 
Elastic Net \citep{zou2005regularization} uses a combination of $l_1$ and $l_2$ regularizers which creates a feature grouping effect. Cluster Elastic Net \citep{witten2014cluster} assumes that each feature belongs to one of k-distinct clusters and uses a clustering penalty, along with LASSO, which minimizes the sum of pairwise distances between the associations of features with the prediction within each cluster. 
A method that combines feature selection and grouping was also developed in \cite{bondell2008simultaneous}.
A few methods explicitly use the correlation between the features in the regularization terms. \cite{li2018graph} estimates the covariance between features from which they form a graph. The graph Laplacian matrix is then used as a quadratic regularizer function.

This line of work typically addresses a linear regression model.  However, in many settings the dependency on the important features is highly non-linear. This is typically the case, for example, in most discrete settings such as classification and clustering. 
Furthermore, most of these methods do not separate different groups of correlated features that are significant in a classification or regression problem. In many settings, such a separation is important in order to gain insight to different sources of the variability in the data. 

A different approach related to this work is feature selection via  discriminant subspace analysis methods such as linear discriminant analysis \citep{song2010feature,sharma2014feature} and the Fukunaga-Koontz transform \citep{fukunaga1970application, ochilov2006fukunaga}. Here, the primary goal is  to identify the subspace that best discriminates the classes according to a given criterion. For example, the classic Fischer criterion \citep{fisher1936use} maximizes the ratio between inter-class variance and  intra-class variance. 
Recent works applied similar methods in the context of non-negative and Boolean matrix factorization \cite{gupta2014matrix,hess2017c} .
Some metric learning approaches~\citep{moutafis2017}, which take into account triplet relationships between points, learn a linear transform on the features so that the distance in the projected space better separates the classes. 
However, these approaches do not explicitly perform feature selection to identify groups of correlated features that separate the classes.

In this paper we develop a data-driven method to detect and group relevant features that does not rely on a specific model. We consider the classification setting, where our aim is to identify class-specific information in the feature space that distinguishes between different conditions. To that end, we develop DiSC, a data-driven method to reveal groups of differential features. Our approach consists of two main steps: 
first, for each class, we compute a class-specific graph, whose nodes correspond to the features of that class separately. 
Next, we apply a spectral approach to obtain an embedding of the features that identifies groups of features whose connectivity differs between the graphs. 
Our contributions in this paper are: 1) develop a nonlinear spectral approach on the feature space to identify groups of correlated features that distinguish between datasets, 2) our solution is non-symmetric such that we identify class-specific differential features, 3) we provide theoretical analysis in the setting of a stochastic block model underlying the features. 
Our method is based on removing the significant features of one class from the subspace spanning the significant features of a second class. 
These class-specific subspaces are found in a nonlinear manner by \emph{constructing a graph on the features} of each class separately from which we calculate a nonlinear embedding~\citep{coifman2006diffusion}. 

Our spectral approach is related to recent papers that address the challenges of multiview and data fusion. Here, data samples are observed by multiple sets of sensors, and the goal is to identify latent representations of the samples that are shared for all sets. 
The shared information can be recovered via the shared latent space between multiple sources. 
Alternating Diffusion Maps (ADM)~\citep{lederman2018learning} is a nonlinear manifold learning approach to reveal shared latent variables. ~\citet{shnitzer2019recovering} extend this to identifying both common structures and the differences between the manifolds underlying the different modalities. 
In an alternative solution, \cite{lindenbaum2020multi} proposes a kernel and distance metric for diffusion maps on multiview datasets.  
There are numerous other data fusion techniques \citep{meng2020survey}, \citep{dong2009advances} and applications \citep{huang2016comparison}. Most of these techniques require one-to-one correspondence between data samples from different views, e.g., simultaneous recordings from different sources, and they mainly focus extracting a shared subspace or a shared hidden variable of the samples.
On the other hand, in our setting the correspondence is between features in two datasets, and not between samples. Our goal is to uncover 
connectivity patterns that are \textrmit{condition specific}.

The rest of the paper is organized as follows. Sec.~\ref{sec:prob_form} presents the problem formulation. Our main contribution is in Sec.~\ref{sec:our_approach}. 
In Sec.~\ref{sec:graph_cut} we motivate our approach from a graph-cut perspective. The description of the steps of DiSC are provided in Sec. \ref{sec:alg}. In Sec.~\ref{sec:double_sbm} we provide a  theoretical analysis that is based on the stochastic block model (SBM). The performance of DiSC on simulated and real datasets is demonstrated in Sec.~\ref{sec:experiments}. Conclusions and future work are discussed in Sec.~\ref{sec:conclusion}.

\section{Problem formulation} \label{sec:prob_form}

We begin with a formal description of our problem, followed by relevant notation. We consider two datasets $X^A \in \mathbb R^{n_A \times p}$ and $X^B \in \mathbb R^{n_B \times p}$ where the rows in each matrix are the high-dimensional samples and the columns are the features, see illustration in Fig.~\ref{fig:overview}a. Both  datasets have the same $p$ features, such that the feature column in $A$, denoted  
$X^A_{\cdot i} \in \mathbb R^{n_A}$ corresponds to the feature column in $B$, denoted $X^B_{\cdot i} \in \mathbb R^{n_B}$.  
Our goal in this work is to detect one or more \textit{subsets of features} that together differentiate between the two states $A$ and $B$. These subsets can be, for example, groups of genes that participate in some biological pathways, or a subset of brain parcels in fMRI data with similar blood oxygen level dependent (BOLD) activity.

In our approach, we compute two graphs, denoted $G_A,G_B$ with $p$ nodes that correspond to the features of the given data (Fig.~\ref{fig:overview}b). 
We denote by $W_A,W_B$ the weight matrices of the two graphs, whose elements are functions of the similarity between features, as computed by two kernel functions,
\begin{align}\label{eq:kernel_functions}
&K_A(X_{\cdot i}^A,X_{\cdot j}^A): \mathbb R^{n_A} \times \mathbb R^{n_A} \to \mathbb R 
 \hspace{0.8cm} \textrm{ and } \hspace{0.01cm} 
&K_B(X_{\cdot i}^B,X_{\cdot j}^B): \mathbb R^{n_B} \times \mathbb R^{n_B} \to \mathbb R.
\end{align}
These for example can be the RBF kernel $K(x,x')=\exp\{-\Vert x-x' \Vert^2 / \epsilon^2\}$.
Our underlying assumption is that differences between the two states $A,B$ are expressed as differences in connectivity between the two graphs $G_A$ and $G_B$. For example, a pair of features $i,j$ may be insignificant in state $A$ and highly significant and dependent in $B$. This will imply,
\[
K_A(X_{\cdot i}^A,X_{\cdot j}^A) \approx 0 \hspace{0.3cm} \textrm{ and } \hspace{0.3cm} K_B(X_{\cdot i}^B,X_{\cdot j}^B) > 0.
\]
If there is a subset of $l$ correlated features $i_1,\ldots,i_l$ that are strongly dependent in $B$ but not in $A$, the resultant nodes in $G_B$ will form an independent component with dense connections among the nodes. However, this independent component will not exist in $G_A$, see illustration in Figure \ref{fig:overview}. Thus, the task of obtaining sets of 
significant features boils down to identifying the independent components that appear in one graph but not the other.

\section{DiSC} \label{sec:our_approach}
In this section we describe our data-driven approach for detecting differential groups of features. In section \ref{sec:graph_cut} we present a \textrmit{graph-cut} perspective to the problem of group feature selection. In section \ref{sec:alg} we describe our approach. In section \ref{sec:double_sbm} we analyze our approach based on the stochastic block model.

\subsection{A graph cut perspective}\label{sec:graph_cut}
Given a graph $G$ with $n$ nodes and its associated weight matrix $W \in \mathbb{R}^{n \times n}$, 
the minimum-cut of $G$ is the 
minimum, over all possible partitions $\alpha$ and $\beta$, of the sum of the edge weights $\sum_{i \in \alpha,j \in \beta} W(i,j)$. 
This task is strongly related to the spectral clustering algorithm. 
For completeness, we begin with a brief description of this relation. For a more thorough review see~\cite{von2007tutorial}.

A variation of the minimum-cut, designed to avoid highly imbalanced partitions, is the ratio-cut
\begin{equation}\label{eq:ratio_cut}
\textrm{Rcut}(\alpha,\beta) = \sum_{i\in \alpha,j \in \beta} W_{ij} \left( \frac{1}{|\alpha|}+ \frac{1}{|\beta|}\right),
\end{equation}
where $|\alpha|,|\beta|$ denote the subset size.
This can be formulated as 
\begin{equation}\label{eq:ratio_cut_vector}
\textrm{Rcut}(\alpha,\beta) = f^T L_u f \hspace{0.6cm} \textrm{ where } f_i = \begin{cases}
\sqrt{\frac{|\beta|}{|\alpha|}} & i \in \alpha \\
-\sqrt{\frac{|\alpha|}{|\beta|}} & i \in \beta
\end{cases}.
\end{equation}
and $L_u$ is the unnormalized graph Laplacian matrix given by $D-W$, where $D$ is a diagonal matrix containing in the diagonal the degrees of each node. 
Note that $ f$ is a non-binary indicator vector and if  $\alpha,\beta$ are non-empty, then $f$ is orthogonal to the constant vector with $\| f\|_2^2=n$.  
Minimizing Eq. \ref{eq:ratio_cut_vector} over all partitions is  a discrete optimization problem. To avoid it, one can relax the discrete requirement over the elements of $f$ in Eq. \ref{eq:ratio_cut_vector}, while still maintaining the constraints $f^T 1 = 0$ and $\| f\|_2^2 = n$. This relaxation yields a simple spectral solution, where the nodes are partitioned according to the sign of the graph Laplacian eigenvector corresponding to the second smallest eigenvalue (a.k.a Fiedler vector). 

\begin{figure}[ht]
    \centering
    \includegraphics[width=0.95\linewidth]{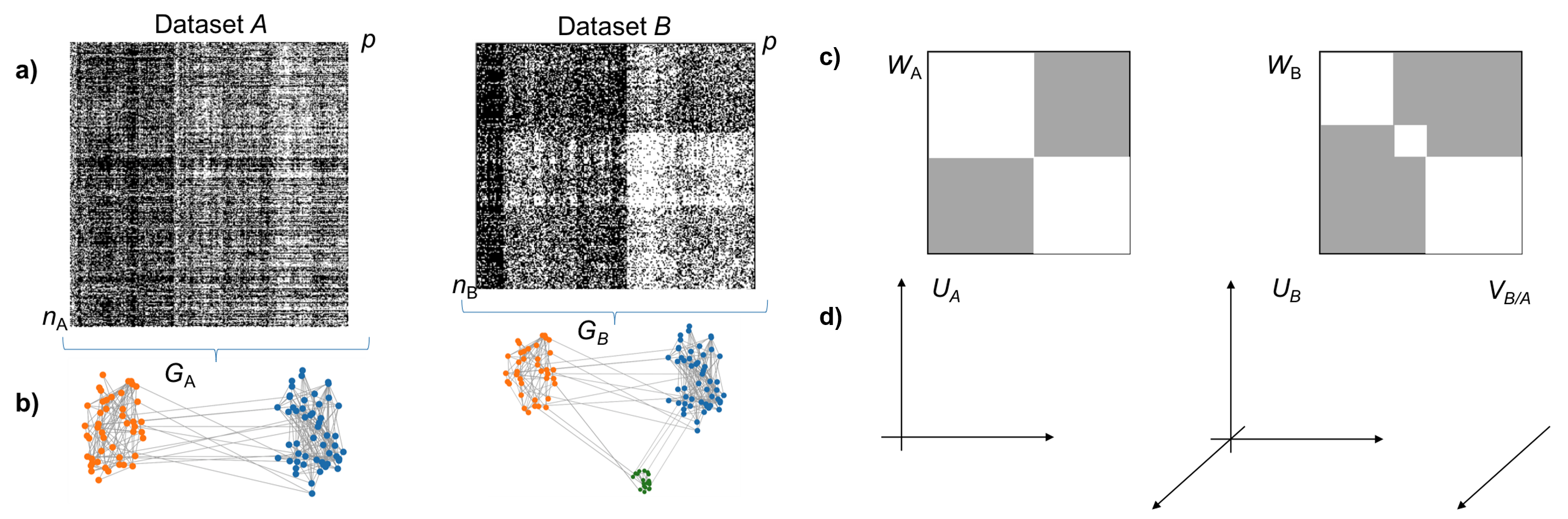}
    \caption{DiSC overview. a) Datasets $A$ and $B$ share the same $p$ features. b) Constructing graphs $G_A$ and $G_B$ on the feature space for each dataset, we aim to find which nodes have different connectivity pattern between the two datasets. c) The weight matrices each follow a stochastic block model, where in $B$ a subset of of the nodes form a separate block.
    d) Subspaces spanned by the eigenvectors of $P_A$ and $P_B$, and the differential feature between $B$ and $A$.}
    \label{fig:overview}
\end{figure}

The minimum-cut criterion is designed to attain densely connected components that have low connectivity with the rest of the graph. In our work, however, densely connected components that appears in both graphs are of no interest, as they do not differentiate between the two states. Rather, we would like to detect a partition $(\alpha,\beta)$, say on $G_B$, 
that has two properties: 1)   Rcut$(\alpha,\beta)$, on $G_B$ is minimized, which indicates that $\alpha$ and $\beta$ are independent connected components in $G_B$; 2) 
To avoid detection of components that exist  both in $G_A$ and $G_B$, Rcut$(\alpha,\beta)$ on $G_A$ is lower bounded by some constant.
This double objective can be formulated by
\begin{equation}\label{eq:double_objective}
\min_{f} f^T L_B  f  \qquad \textrm{s.t.} \quad f^T L_A  f \geq \gamma, \quad \|f\| = 1.
\end{equation}
The constraint  $\|f\|=1$ is added for uniqueness, since $L_A,L_B$ are rank deficient.  
Relaxing the discrete requirement in Eq. \ref{eq:ratio_cut_vector}, the solution to Eq. \ref{eq:double_objective} is given by the generalized eigenvalue problem $L_B f = \mu L_A f$. If $L_A$ is invertible, the solution to this is given by the eigenvalue problem $L_A^{-1} L_B f=\mu f$. But if $L_A$  is not invertible, one numerical trick is to strengthen the diagonal of $L_A$ to make it a full rank matrix. Then the solution is given by the eigenvalue problem, 
\begin{equation}\label{eq:generalized_eigenvector}
(L_A + \epsilon I)^{-1} L_B f = \mu  f.   
\end{equation}
Motivated by the above derivation, in the next section we present our spectral approach for group feature discovery. 
In the appendix we present a similar derivation that is based on the normalized cut, whose relaxation relates to the eigenvectors of the random walk graph Laplacian.

\subsection{Spectral approach for group differential feature extraction}
\label{sec:alg}
\looseness=-1
In practice, our approach is based on the spectral decomposition of the two random-walk Laplacian matrices $P_A=D_A^{-1}W_A$ and $P_B=D_B^{-1}W_B$, where $D_A$ and $D_B$ are the degree matrices of $A$ and $B$ respectively, rather than the unnormalized graph Laplacian.
Let $U_B \in \mathbb R^{p \times d_B}$ be a matrix containing the $d_B$ leading right eigenvectors of $P_B$. 
These vectors, scaled by their corresponding eigenvalues, are the diffusion vectors of the two graphs~\citep{coifman2006diffusion}. A vector containing the $i$-th  element of the columns of $U_A$ is defined as the \textrmit{diffusion map representation} of the $i$-th node.
Here, we define a second operator $Q_B$, that forms a projection matrix onto the complementary subspace of $U_B$:
\begin{equation}\label{eq:projection}
Q_B = I - U_B (U_B^T U_B)^{-1} U_B^T.
\end{equation}

\looseness=-1
We define the \textrmit{differential vectors} of $G_A$, denoted $(V_A)_i$ and their significance level $(\sigma_A)_i$ via
\begin{align}
     (V_A)_i  = 
        \argmax_{\textrmrm{dim} E=i} \min_{v\in S(E)} ||P_AQ_Bv||_2.  \hspace{0.5cm} \textrm{and} \hspace{0.5cm}
     (\sigma_A)_i   = ||P_AQ_B(V_A)_i||_2. \label{eq:differentiating_vectors}
\end{align}
where $E$ is a subspace in $\mathbb R^{p}$ and $S(E)$ denotes unit Euclidean sphere in $E$. The relation between the diffusion vectors $U_A$ and the differentiating vectors $V_A$ is similar in nature to the relation between the outcome of the min-cut criterion in Eq. \ref{eq:ratio_cut_vector} and the double objective in Eq. \ref{eq:double_objective}. The diffusion vectors in $U_B$ highlight significant processes that underlie state $B$. However, we are only interested in processes that differentiate between state $A$ and $B$. These are highlighted in the differential vectors $V_B$.
Thus, our approach for detecting differences between graphs is similar to the solution of the double objective criterion given in Eq.~\ref{eq:generalized_eigenvector}. However, to increase the stability of our approach we make two changes:
1) replace the inverse matrix in Eq.~\ref{eq:generalized_eigenvector} with the projection matrix $Q_B$ to the complementary subspace, thus avoiding potential complications and additional constraints required for the inverse computation.
2) Instead of the right eigenvectors, we compute the leading singular vectors.

Note that the operator $P_AQ_B$ whose singular vectors are used to attain differential features is not symmetric in $A$ and $B$. The differential vectors in $V_A$ 
highlight components that are significant in $A$ but not in $B$. The differential vectors $V_B$ are attained from the singular vectors of $P_BQ_A$.

\paragraph{Differential vectors for downstream analysis} 

The differential vectors $V_A,V_B$ and their scores $\sigma_A,\sigma_B$ can be used in several ways. Here, and in the experimental section, we apply the vectors in one of two applications: (i) Detecting of differential subsets of features. This can be done by performing $k$-means clustering over the rows of the matrix $V_A$; (ii) Computing differential meta-features via $V_A^T X_A$ and $V_B^T X_B$. In section \ref{sec:experiments} and supplementary material we provide several examples for using both in downstream analysis (e.g., clustering, classification). 
In the next section we derive  a theoretical justification for application (i) for our approach that is motivated by the stochastic block model. Algorithm~\ref{alg} summarizes the steps of the DiSC algorithm. Further details about the computation of the graph and discussion on choice of hyperparameters is given in App.~\ref{app:hyperparam}.

\begin{algorithm}[ht]
\caption{DiSC}
\begin{flushleft}
\textrmbf{Input:} 
\begin{tabular}[t]{ll}
Datasets $X^A$ and $X^B$ \\
Two kernel functions $K_A(\cdot,\cdot)$ and $K_B(\cdot,\cdot)$
\end{tabular} \\
\textrmbf{Output:} Subsets of differentiating features $V_A,V_B$
\end{flushleft}
\vspace{-0.2cm}
\begin{algorithmic}[1]
\State Compute two graphs $G_A$ and $G_B$ on the columns of $X^A$ and $X^B$ with weights given by \ref{eq:kernel_functions}.
\State Compute the random walk matrix, $
P_A = D_A^{-1}W_A, P_B = D_B^{-1}W_B.
$
\State Calculate $U_A,U_B$, the leading right eigenvectors of $P_A,P_B$.
\item Compute the projection matrices $Q_A,Q_B$ via Eq. \ref{eq:projection}. 
\State Compute differential vectors $V_A$ and $V_B$ via Eq. \ref{eq:differentiating_vectors}.
\State Compute significance levels  $\sigma_A$ and $\sigma_B$ via Eq. \ref{eq:differentiating_vectors}.
\State optional: Perform k-means over the rows of $V_A$, and $V_B$.
\end{algorithmic}
\label{alg}
\end{algorithm}

\subsection{Two stochastic block models}\label{sec:double_sbm}

The stochastic block model (SBM) has received a lot of attention due to its significant role in obtaining theoretical guarantees for community detection. In this setting, the individual community members are modeled as nodes in a random graph $G$. 
The edge weights $W_{ij}$ are sampled according to a Bernoulli distribution, with
\begin{equation}\label{eq:sbm}
\Pr(W_{ij} = 1) = 
\begin{cases}
p &  \text{  if $i,j$ belong to the same community} \\
q &  \text{  otherwise}.
\end{cases}
\end{equation}
Usually, one assumes $p>q$ such that the connectivity within a block is stronger than the connectivity between blocks, See ~\citep{abbe2015exact} for further details. 

In our setting, we model the features in both states by two random graphs $G_A$ and $G_B$.
We consider a toy problem in which the features that differentiate two states $A$ and $B$ have different graph connectivity in $G_A$ and $G_B$. 
In graph $G_A$ the nodes are partitioned into two communities of sizes $l$ and $l+s$, with $s < l$. In contrast, the graph $G_B$ has three communities, denoted $\alpha,\beta$ and $\gamma$. Community $\alpha$ is equal to the first community in $G_A$, while $\beta,\gamma$ are a partition of the second into two groups of size $l$ and $s$.

Our goal is to detect the elements in $\gamma$ via a spectral approach whose steps are similar to Algorithm \ref{alg}. 
For simplicity of exposition, instead of using the graph Laplacian matrix as in step 2 of the algorithm, we use the symmetric weight matrices $W_A,W_B$. Our goal in this analysis is to provide insight into our ability to recover differentiating groups such as $\gamma$ via Algorithm \ref{alg}. Our main parameters of interest are the size of the differentiating group $\gamma$, and the ratio between the size of $\gamma$ and the size of the original block. Elements that depend on other parameters of the model (e.g. $p$ and $q$) are referred to as constants.
Our derivation consists of three main steps. The proofs of our results are given in Appendix \ref{app:proofs}. In addition, appendix \ref{sec:validation_theory} presents numerical results that validate the bound in lemma \ref{lem:concentation_v_gamma} and test its optimality.  
We note that all matrix norms (i.e. $\|X\|$) in the following section and relevant appendices are the spectral norm. 

\noindent \textbf{Step 1:} Here we address the non-random matrix $\WB$ where $\mathbb{E[\cdot]}$ is the expectation operator. 
Let $v_\alpha,v_\beta,v_\gamma$ be the leading three eigenvectors of $\WB$ and by $e_\gamma$ a binary indicator vector with elements $(e_\gamma)_i=1$ if $i \in \gamma$.  
\begin{lemma}\label{lem:expected_wb}
The distance between $v_\gamma$ and $\frac{1}{\sqrt{s}}e_\gamma$ is bounded by
\[
\Big\|v_\gamma - \frac{1}{\sqrt{s}} e_\gamma\Big\|\leq C_1(p,q)\sqrt{\frac{s}{l}}.
\]
The eigenvalue corresponding to $v_\gamma$ is larger than $(p-q)s$.
\end{lemma}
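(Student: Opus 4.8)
The plan is to use the fact that $\WB$ is, up to a diagonal correction, a rank-$3$ matrix, and to collapse the eigenproblem to a $2\times 2$ matrix via the built-in symmetry between the two equal-size blocks $\alpha,\beta$. Write $e_\alpha,e_\beta,e_\gamma$ for the community indicator vectors; they are disjoint, hence orthogonal, with $\|e_\alpha\|^2=\|e_\beta\|^2=l$ and $\|e_\gamma\|^2=s$. Under \eqref{eq:sbm}, $\WB = q\,\mathbf 1\mathbf 1^\top + (p-q)\bigl(e_\alpha e_\alpha^\top + e_\beta e_\beta^\top + e_\gamma e_\gamma^\top\bigr)$ (if self-loops are excluded this holds up to a $-pI$ term, which is $O(1)$, hence negligible against the quantities below and absorbed into the constants). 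Since $\mathbf 1=e_\alpha+e_\beta+e_\gamma$, this matrix has rank $3$, acts only on $S=\mathrm{span}\{e_\alpha,e_\beta,e_\gamma\}$, and in the orthonormal basis $\hat e_\alpha=e_\alpha/\sqrt l,\ \hat e_\beta=e_\beta/\sqrt l,\ \hat e_\gamma=e_\gamma/\sqrt s$ is represented by the $3\times 3$ matrix $M$ with $M_{\alpha\alpha}=M_{\beta\beta}=pl$, $M_{\gamma\gamma}=ps$, $M_{\alpha\beta}=ql$, $M_{\alpha\gamma}=M_{\beta\gamma}=q\sqrt{ls}$. The three leading eigenvectors of $\WB$ are exactly the eigenvectors of $M$ pushed back to $S$, so everything reduces to analyzing $M$.

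Next I would exploit the $\alpha\!\leftrightarrow\!\beta$ symmetry. The vector $\tfrac1{\sqrt2}(\hat e_\alpha-\hat e_\beta)$ is an eigenvector of $M$ with eigenvalue $(p-q)l$. On its orthogonal complement in $S$, spanned by $u_1=\tfrac1{\sqrt2}(\hat e_\alpha+\hat e_\beta)$ and $u_2=\hat e_\gamma$, the matrix $M$ acts as
\[
\tilde M=\begin{pmatrix} (p+q)l & \sqrt2\,q\sqrt{ls}\\[2pt] \sqrt2\,q\sqrt{ls} & ps\end{pmatrix}=\begin{pmatrix} a & b\\ b & d\end{pmatrix}.
\]
Because $s<l$ we have $a\gg d$ and (since $q<p$) $b^2<ad$, so $\tilde M$ has two positive eigenvalues $\lambda_+>\lambda_-$ whose eigenvectors are close to $u_1$ and $u_2=\tfrac1{\sqrt s}e_\gamma$ respectively; thus $v_\gamma$ is the $\lambda_-$-eigenvector, and both assertions of the lemma are statements about $\tilde M$.

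For the eigenvalue bound set $t:=(p-q)s$. Using $\lambda_-=\tfrac12\bigl(a+d-\sqrt{(a-d)^2+4b^2}\bigr)$, the claim $\lambda_->t$ is equivalent — after noting $a+d-2t=(p+q)l-(p-2q)s>0$ (from $s<l$) and squaring — to $(a-t)(d-t)>b^2$; substituting $a,b,d,t$, this simplifies exactly to $l>s$, which holds by hypothesis. For the eigenvector bound, the $\lambda_-$-eigenvector of $\tilde M$ is proportional to $(-b,\,a-\lambda_-)$; choosing the sign with positive $u_2$-component gives $\|v_\gamma-u_2\|^2=2\bigl(1-\tfrac{a-\lambda_-}{\sqrt{b^2+(a-\lambda_-)^2}}\bigr)\le \tfrac{b^2}{(a-\lambda_-)^2}$, i.e.\ $\|v_\gamma-\tfrac1{\sqrt s}e_\gamma\|\le \tfrac{b}{a-\lambda_-}$. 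Finally, by the min--max characterization $\lambda_-\le \tilde M_{22}=ps$, so $a-\lambda_-\ge(p+q)l-ps>ql$ (again using $s<l$), whence $\|v_\gamma-\tfrac1{\sqrt s}e_\gamma\|\le \tfrac{\sqrt2\,q\sqrt{ls}}{ql}=\sqrt2\,\sqrt{s/l}$, which is the claimed estimate (one may take $C_1(p,q)=\sqrt2$, or the sharper $\tfrac{\sqrt2\,q}{p+q}$ by retaining $(p+q)l-ps$ in the denominator).

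The only genuinely structural step — the part I'd call the main point rather than an obstacle — is recognizing the rank-$3$-plus-block-symmetry structure that reduces $\WB$ to the $2\times 2$ matrix $\tilde M$; after that everything is elementary $2\times2$ linear algebra. The steps needing the most care are verifying the sign condition $a+d-2t>0$ before squaring in the eigenvalue argument, and checking that the SBM diagonal convention (self-loops or not) only shifts the spectrum by $O(1)$, which is dominated by $\lambda_-=\Theta(s)$ and by the gap $(p-q)l-\lambda_-=\Theta(l)$, so none of the leading-order conclusions change.
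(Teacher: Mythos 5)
Your proof is correct, and it reaches the same bounds as the paper up to constants, but it finishes the argument differently. Both you and the paper first reduce $\mathbb{E}[W_B]$ to the same $3\times 3$ matrix (the paper writes it as $\Delta\Theta\Delta$ in the orthonormal basis $e_\alpha/\sqrt{l},e_\beta/\sqrt{l},e_\gamma/\sqrt{s}$, which is exactly your $M$), and both exploit the fact that the antisymmetric vector $(\hat e_\alpha-\hat e_\beta)/\sqrt2$ is an exact eigenvector with eigenvalue $(p-q)l$. The divergence is in the last step: the paper treats $\Delta\Theta\Delta$ as a perturbation of the decoupled matrix $Z$ (obtained by zeroing the $q\sqrt{ls}$ cross terms) and invokes the Davis--Kahan variant, which gives $\|v_\gamma-\tfrac{1}{\sqrt s}e_\gamma\|\le\sqrt{8s/l}$ and then a separate bound $\lambda_{\min}(\Delta\Theta\Delta)\ge\lambda_{\min}(\Delta)^2\lambda_{\min}(\Theta)=(p-q)s$; you instead restrict to the $2\times 2$ block $\tilde M$ on the complement of the antisymmetric eigenvector and diagonalize it exactly, which buys a slightly sharper constant (your $\sqrt{2}\sqrt{s/l}$, or $\tfrac{\sqrt2 q}{p+q}\sqrt{s/l}$, versus the paper's $\sqrt{8s/l}$) and a strict eigenvalue inequality $\lambda_->(p-q)s$ that you show is exactly equivalent to $l>s$. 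The paper's perturbative phrasing has the advantage of matching the Davis--Kahan machinery it reuses in Lemma \ref{lem:concentation_v_gamma}, whereas your elementary computation is self-contained and makes the dependence of $C_1(p,q)$ explicit. Your identification of $v_\gamma$ with the $\lambda_-$ eigenvector and your handling of the self-loop convention (a $-pI$ shift that leaves eigenvectors unchanged) carry the same implicit assumptions as the paper ($s/l$ small enough that the $\gamma$-aligned eigenvalue sits below $(p-q)l$), so neither point constitutes a gap relative to the paper's own argument.
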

Lemma \ref{lem:expected_wb} shows that if the ratio $s/l$ is small enough, one can recover the elements in $\gamma$ by applying a threshold to $v_\gamma$.
However, in practice we only have access to $W_B$ and $W_A$.
In the next step we bound the difference between $v_\gamma$ and the eigenvector we compute by our spectral approach. 

\noindent \textbf{Step 2:}
Let $u_\alpha,u_\beta$ be the two leading eigenvectors of 
the random weight matrix $W_A$, and let $Q_{W_A} = I-u_\alpha u_\alpha^T - u_\beta u_\beta^T$.
\begin{lemma}\label{lem:concentation_v_gamma}
Let $\tilde v_\gamma$ denote the leading eigenvector of $Q_{W_A} W_B Q_{W_A}$. Then, 
\[
\|\tilde v_\gamma - v_\gamma\| \leq C_2(p,q)\frac{\sqrt{l}}{s} + C_3(p,q) \sqrt{\frac{s}{l}}. 
\qquad \text{w.p} \qquad 1-\exp(-l).
\]
\end{lemma}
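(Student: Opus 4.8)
The plan is to compare the random operator $M:=Q_{W_A}W_BQ_{W_A}$ with a deterministic "ideal" operator $M^\star:=Q^\star\,\mathbb{E}[W_B]\,Q^\star$, where $Q^\star=I-P_{\mathcal U}$ is the orthogonal projection onto the complement of $\mathcal U:=\operatorname{span}\{e_\alpha,\,e_\beta+e_\gamma\}$, the column space of $\mathbb{E}[W_A]$. The first observation is structural: since in $G_A$ the two communities are $\alpha$ and $\beta\cup\gamma$, the matrix $\mathbb{E}[W_A]$ is constant on the blocks of the partition $\{\alpha,\beta\cup\gamma\}$, hence $\operatorname{range}(\mathbb{E}[W_A])=\mathcal U$, its two nonzero eigenvalues are of order $(p\pm q)l$, so $\lambda_2(\mathbb{E}[W_A])\ge c(p,q)\,l$ and $\mathcal U$ is exactly its top‑two eigenspace. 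Crucially $\mathcal U\subset\operatorname{range}(\mathbb{E}[W_B])=\operatorname{span}\{e_\alpha,e_\beta,e_\gamma\}$, a two‑dimensional subspace of a three‑dimensional one, so $Q^\star\mathbb{E}[W_B]Q^\star$ has rank one, with range the single direction $\hat w:=(l e_\gamma-s e_\beta)/\|l e_\gamma-s e_\beta\|$ spanning $\operatorname{span}\{e_\alpha,e_\beta,e_\gamma\}\cap\mathcal U^{\perp}$; hence $\hat w$ is the exact leading eigenvector of $M^\star$.

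Step 1 (concentration). Using the standard spectral concentration bound for adjacency matrices of dense inhomogeneous random graphs (the graphs have $2l+s\le 3l$ nodes and expected degrees $\Theta(l)$), with probability at least $1-\exp(-l)$ one has $\|W_A-\mathbb{E}[W_A]\|\le c_1(p,q)\sqrt l$ and $\|W_B-\mathbb{E}[W_B]\|\le c_1(p,q)\sqrt l$; in particular $\|W_B\|\le c_2(p,q)\,l$. Step 2 (isolate $\gamma$). Davis--Kahan applied to $W_A$, with gap $\lambda_2(\mathbb{E}[W_A])\ge c(p,q)l$ and the bound of Step 1, gives $\|Q_{W_A}-Q^\star\|\le c_3(p,q)/\sqrt l$. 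A direct computation $\mathbb{E}[W_B](l e_\gamma-s e_\beta)=ls(p-q)(e_\gamma-e_\beta)$ (lower‑order diagonal corrections absorbed into constants) then yields the leading eigenvalue of $M^\star$, namely $\mu^\star=\hat w^\top\mathbb{E}[W_B]\hat w=\tfrac{2ls(p-q)}{l+s}\ge c_4(p,q)\,s$, with every other eigenvalue of $M^\star$ equal to $0$ (or $-p$); thus the spectral gap of $M^\star$ at its top eigenvalue is $\mu^\star\ge c_4(p,q)s$.

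Step 3 ($\hat w$ near $v_\gamma$). Lemma~\ref{lem:expected_wb} gives $\|v_\gamma-s^{-1/2}e_\gamma\|\le C_1\sqrt{s/l}$, while an elementary computation gives $\langle s^{-1/2}e_\gamma,\hat w\rangle=\sqrt{l/(l+s)}\ge 1-s/(2l)$, so $\|s^{-1/2}e_\gamma-\hat w\|\le\sqrt{s/l}\,(1+o(1))$ (signs chosen consistently), and by the triangle inequality $\|\hat w-v_\gamma\|\le C_3(p,q)\sqrt{s/l}$. Step 4 (perturb $M$ and conclude). Write $Q_{W_A}=Q^\star+\Delta$, $\|\Delta\|\le c_3/\sqrt l$; expanding, $M-M^\star=(Q^\star W_B\Delta+\Delta W_B Q^\star+\Delta W_B\Delta)+Q^\star(W_B-\mathbb{E}[W_B])Q^\star$, so with Step 1 and $\|W_B\|\le c_2 l$ we get $\|M-M^\star\|\le c_5(p,q)\sqrt l$. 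Since $M,M^\star$ are symmetric and (in the only non‑vacuous regime $s\gg\sqrt l$, as $\|\tilde v_\gamma-v_\gamma\|\le 2$ always) $\|M-M^\star\|=o(\mu^\star)$, Davis--Kahan applied to $M=M^\star+(M-M^\star)$ with gap $\mu^\star\ge c_4 s$ gives $\|\tilde v_\gamma-\hat w\|\le 2\|M-M^\star\|/\mu^\star\le C_2(p,q)\sqrt l/s$. Combining with Step 3, $\|\tilde v_\gamma-v_\gamma\|\le C_2(p,q)\sqrt l/s+C_3(p,q)\sqrt{s/l}$.

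The main obstacle is Step 2: recognizing that projecting out the leading eigenspace of $W_A$ collapses $\mathbb{E}[W_B]$ onto the single direction $\hat w\propto l e_\gamma-s e_\beta$ (this is where the nesting $\operatorname{range}(\mathbb{E}[W_A])\subset\operatorname{range}(\mathbb{E}[W_B])$ is essential), and then pinning down the resulting eigenvalue $\mu^\star=\Theta(s)$ — it is the smallness of this gap, against a perturbation $\|M-M^\star\|=\Theta(\sqrt l)$ forced by $\|W_B\|=\Theta(l)$, that produces the $\sqrt l/s$ term. A secondary technical point is that Step 1 must deliver the sharp $\sqrt l$ rate rather than $\sqrt{l\log l}$ to match the stated bound and the $1-\exp(-l)$ probability.
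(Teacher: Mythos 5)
Your proposal is correct, and it reaches the stated bound by a genuinely different decomposition than the paper's. The paper pivots on the operator $Q_{\mathbb{E}[W_B]}\mathbb{E}[W_B]Q_{\mathbb{E}[W_B]}$, whose leading eigenvector is $v_\gamma$ itself, and then applies Davis--Kahan once with denominator $\lambda_3\geq (p-q)s$; the entire burden falls on bounding the operator-norm difference $\|Q_{W_A}W_BQ_{W_A}-Q_{\mathbb{E}[W_B]}\mathbb{E}[W_B]Q_{\mathbb{E}[W_B]}\|$, which forces it to control $\mathcal E=Q_{W_A}-Q_{\mathbb{E}[W_B]}$ (of size $\sqrt{s/l}$, its Lemma~\ref{lem:bound_e}) and, crucially, to prove the refined bound $\|Q_{W_A}W_B\|\lesssim\sqrt{l}+(p-q)s^{3/2}/l$ (its Lemma~\ref{lem:bound_QW}), since the crude estimate $\|W_B\|\,\|\mathcal E\|\approx\sqrt{ls}$ would only yield $\sqrt{l/s}$ after division by $\lambda_3$. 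You instead pivot on $Q_{\mathbb{E}[W_A]}\mathbb{E}[W_B]Q_{\mathbb{E}[W_A]}$, exploit the nesting of ranges to see it is exactly rank one with closed-form eigenvector $\hat w\propto l e_\gamma-s e_\beta$ and eigenvalue $2ls(p-q)/(l+s)$, and thereby make the random perturbation involve only $\Delta=Q_{W_A}-Q_{\mathbb{E}[W_A]}=O(1/\sqrt{l})$, so the crude bounds $\|W_B\|\lesssim l$ and $\|W_B-\mathbb{E}[W_B]\|\lesssim\sqrt{l}$ suffice to get $\|M-M^\star\|\lesssim\sqrt{l}$ with no analogue of Lemma~\ref{lem:bound_QW}; the deterministic $A$-versus-$B$ mismatch is then paid at the eigenvector level, $\|\hat w-v_\gamma\|\lesssim\sqrt{s/l}$, via Lemma~\ref{lem:expected_wb} and an explicit inner-product computation. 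Both routes use the same two pillars (the SBM concentration bound at rate $\sqrt{l}$ with probability $1-\exp(-l)$, and the Yu--Wang--Samworth form of Davis--Kahan with spectral gap of order $(p-q)s$), and your source of the two terms $\sqrt{l}/s$ and $\sqrt{s/l}$ is cleanly separated, arguably making the argument more transparent; the paper's route avoids your explicit diagonalization of the pivot but pays for it with the auxiliary Lemmas~\ref{lem:double_projection_diff}--\ref{lem:bound_QW}. Only cosmetic points remain on your side: the Davis--Kahan constant is $2^{3/2}$ rather than $2$, the sign of the eigenvectors must be fixed consistently (as you note), and a union bound over the events for $W_A$ and $W_B$ should be mentioned, none of which affects the stated result.
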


\noindent \textbf{Step 3:} 
Observing the two lemmas, we see that there is a tradeoff concerning the size $s$ of the differentiating group $\gamma$. 
On the one hand, Lemma~\ref{lem:expected_wb} shows that  having a small value for $s$ makes the element more distinguishable in 
$v_\gamma$ and thus easier to detect. On the other hand, if $s < \sqrt{l}$ then the computed vector $\tilde v_\gamma$ might be too noisy to actually detect the relevant features. 

Combining the two lemmas, we conclude with the following theorem.
\begin{theorem}\label{thm:guarantee}
We assume that $s,l$ are large s.t. $s,l \gg \max_i(C_i)$, and $s = l^\alpha$ with  $0.5 < \alpha <1$.  We apply a threshold to  $\tilde v_\gamma$ 
to determine the elements of $\gamma$. The relative number of errors $N_\epsilon/s$ is bounded by
\[
\begin{cases}
C_2(p,q) l^{1-2\alpha} & 0.5< \alpha\leq 2/3 \\
C_3(p,q) l^{\alpha-1} & 2/3< \alpha < 1
\end{cases}.
\qquad \text{w.p} \qquad 1-\exp(-l).
\]
\end{theorem}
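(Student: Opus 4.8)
The plan is to chain Lemmas~\ref{lem:expected_wb} and~\ref{lem:concentation_v_gamma} by the triangle inequality and then convert the resulting $\ell_2$ bound on the computed eigenvector into a bound on the number of misclassified coordinates. Since eigenvectors are only defined up to sign, I first fix the sign of $\tilde v_\gamma$ so that it is positively correlated with $e_\gamma$ (equivalently, with $v_\gamma$, whose leading part is $\tfrac{1}{\sqrt s}e_\gamma$ by Lemma~\ref{lem:expected_wb}). With this choice, Lemma~\ref{lem:concentation_v_gamma} and Lemma~\ref{lem:expected_wb} give, on the event of probability $1-\exp(-l)$,
\[
\Bigl\| \tilde v_\gamma - \tfrac{1}{\sqrt s} e_\gamma \Bigr\|
\le \|\tilde v_\gamma - v_\gamma\| + \Bigl\| v_\gamma - \tfrac{1}{\sqrt s} e_\gamma \Bigr\|
\le C_2(p,q)\frac{\sqrt l}{s} + \bigl(C_1(p,q)+C_3(p,q)\bigr)\sqrt{\frac{s}{l}} .
\]
Substituting $s=l^\alpha$ turns the right-hand side into $C_2\, l^{1/2-\alpha} + (C_1+C_3)\, l^{(\alpha-1)/2}$, which is $o(1)$ precisely because $0.5<\alpha<1$; the hypothesis $s,l\gg\max_i C_i$ is what guarantees this quantity is genuinely small (in particular eventually below $1$), rather than merely bounded, making the thresholding step below non-vacuous.

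Next I would pass from this $\ell_2$ control to a count of errors. Declare coordinate $i$ to belong to $\gamma$ iff $(\tilde v_\gamma)_i \ge \tfrac{1}{2\sqrt s}$ (any fixed threshold strictly inside $(0,1/\sqrt s)$ and bounded away from both endpoints works, changing only constants). If $i$ is misclassified then $(\tilde v_\gamma)_i$ differs from its target value ($\tfrac{1}{\sqrt s}$ if $i\in\gamma$, $0$ otherwise) by at least $\tfrac{1}{2\sqrt s}$, so each of the $N_\epsilon$ errors contributes at least $\tfrac{1}{4s}$ to $\bigl\| \tilde v_\gamma - \tfrac{1}{\sqrt s} e_\gamma \bigr\|^2$. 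Hence, using $(a+b)^2\le 2a^2+2b^2$,
\[
\frac{N_\epsilon}{s} \le 4\,\Bigl\| \tilde v_\gamma - \tfrac{1}{\sqrt s} e_\gamma \Bigr\|^2
\le 8\,C_2^2\, l^{1-2\alpha} + 8\,(C_1+C_3)^2\, l^{\alpha-1}.
\]
Finally I compare the two exponents: $1-2\alpha \ge \alpha-1$ if and only if $\alpha\le 2/3$. So for $0.5<\alpha\le 2/3$ the $l^{1-2\alpha}$ term dominates and the bound collapses to $C_2(p,q)\,l^{1-2\alpha}$ after redefining the constant, while for $2/3<\alpha<1$ the $l^{\alpha-1}$ term dominates and the bound collapses to $C_3(p,q)\,l^{\alpha-1}$. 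The failure probability $1-\exp(-l)$ is inherited verbatim from Lemma~\ref{lem:concentation_v_gamma}.

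Given the two lemmas, this argument is essentially bookkeeping, so I do not expect a serious obstacle at the level of the theorem. The only genuinely delicate points are (i) the sign normalization of $\tilde v_\gamma$, which must be carried out before any triangle inequality is applied, and (ii) the $\ell_2$-to-Hamming conversion, which is responsible for \emph{squaring} the per-coordinate rates of Lemmas~\ref{lem:expected_wb}--\ref{lem:concentation_v_gamma} and hence for the doubled exponents $l^{1-2\alpha}$, $l^{\alpha-1}$ in the statement and for the crossover location $\alpha=2/3$. The real work of the section sits upstream, in establishing the eigenvector perturbation bound of Lemma~\ref{lem:concentation_v_gamma} (a Davis--Kahan plus matrix-concentration argument applied to $Q_{W_A}W_B Q_{W_A}$), not in deducing the theorem from it.
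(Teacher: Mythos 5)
Your proposal is correct and follows essentially the same route as the paper's proof: combine Lemmas~\ref{lem:expected_wb} and~\ref{lem:concentation_v_gamma} by the triangle inequality with $s=l^\alpha$, threshold $\tilde v_\gamma$ at $\tfrac{1}{2\sqrt{s}}$ so that each misclassified coordinate contributes at least $\tfrac{1}{4s}$ to the squared $\ell_2$ distance, and compare the exponents $l^{1-2\alpha}$ versus $l^{\alpha-1}$ with crossover at $\alpha=2/3$. Your explicit sign normalization of $\tilde v_\gamma$ is a detail the paper leaves implicit but does not change the argument.
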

\begin{proof}[Proof of Theorem \ref{thm:guarantee}]
Let $C_4(p,q) = C_1(p,q)+C_3(p,q)$. combining Lemmas \ref{lem:expected_wb}, \ref{lem:concentation_v_gamma} with the triangle inequality, and assuming $ s = l^\alpha$  yields the inequality
\[
\Big\|\tilde v_\gamma - \frac{1}{\sqrt{s}}e_\gamma\Big\|^2 \leq C_2(p,q)^2 l^{(1-2\alpha)} +C_4(p,q)^2 l^{(\alpha-1)}+ C_2(p,q)C_4(p,q)l^{-\alpha/2} .
\]
For $0.5< \alpha < 2/3$, the dominant term is $C_2(p,q)^2 l^{1-2\alpha}$. 
We recover the elements in $\gamma$ by setting a threshold of $\frac{1}{2\sqrt{s}}$ to $\tilde v_\gamma$.
Each misclassified element contributes at least $1/(4s)$ to the squared $l_2$ distance between $\frac{1}{\sqrt{s}}e_\gamma$ and $\tilde v_\gamma$. Thus, for $0.5< \alpha \leq 2/3$ the relative number of errors $N_\epsilon/s$ is bounded with high probability by
\[
\frac{N_\epsilon}{s} \leq \Big\|\tilde v_\gamma - \frac{1}{\sqrt{s}}e_\gamma\Big\|^2 \times (4s)/s \leq 4C_2(p,q)^2 l ^{1-2\alpha}
\]
A similar derivation can be done for $2/3<\alpha<1$.
\end{proof}

\section{Experiments \footnote{Code to reproduce the results for Sections \ref{sec:toy_problem} and \ref{sec:mnist} is available in \texorpdfstring{\url{https://github.com/Mishne-Lab/DiSC}}{https://github.com/Mishne-Lab/DiSC}}} \label{sec:experiments}

\subsection{Toy problems }\label{sec:toy_problem}
We demonstrate the usefulness of our methodology in three toy datasets which present different scenarios of feature correlation patterns: 
\looseness = -1
1) a subset of features that is uncorrelated in one dataset becomes correlated in the second 2) a subset of features that are correlated in one dataset are divided into two correlated groups in the second. 3) The third problem demonstrates generalizing DiSC to more than two datasets. In all experiments the kernels $K_A,K_B$ are the RBF kernel with an adaptive bandwidth. The choice of the all the hyperparameters (bandwidth, $d_A$ and $d_B$) are discussed in Appendix~\ref{app:hyperparam}. 

\begin{figure}[th]
    \centering
    \includegraphics[width=\textwidth]{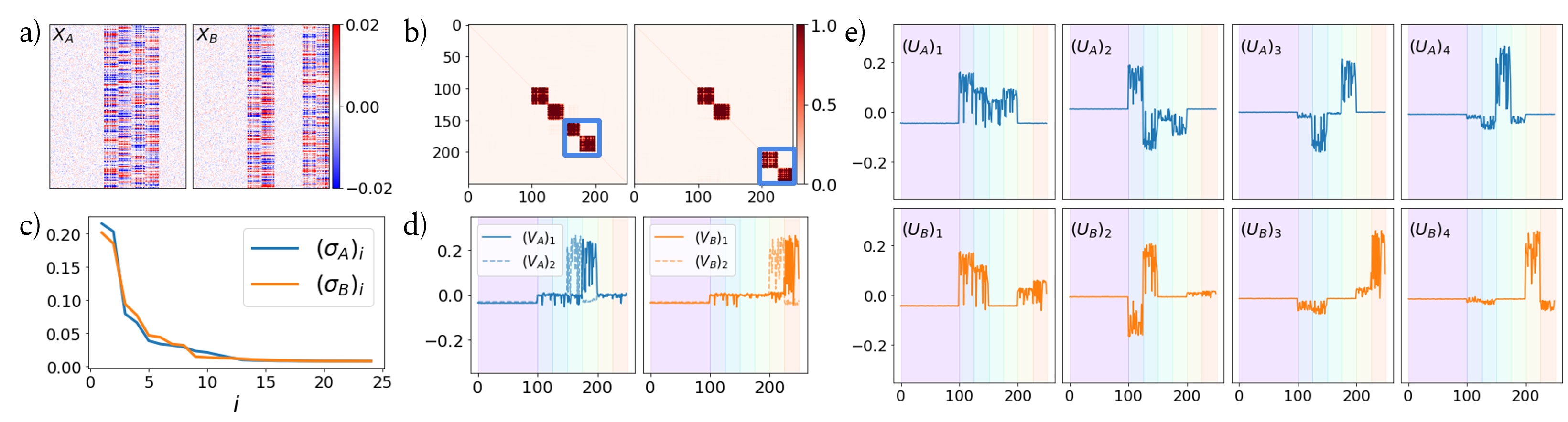}
    \vspace{-0.7cm}
    \caption{a) Random samples from $X^A$ and $X^B$. b) Feature correlation matrices of $X^A$ (left) and $X^B$ (right). c) Significance of the differential vectors of $X^A$ (blue) and $X^B$ (orange). d) Top two differential vectors. e) Top four diffusion maps of the features of  $X^A$ (blue) and $X^B$ (orange). }
    \label{fig:toy1}
\end{figure}
\paragraph{Identifying newly connected features}
We generated two datasets $X^A$ and $X^B$ with $p=250$ features and $n_A=n_B=10,000$ samples from a Gaussian mixture model (Fig.~\ref{fig:toy1}(a)), whose feature correlation is shown in Fig.~\ref{fig:toy1}(b). In the two datasets, the first 100 features are i.i.d samples from a normal distribution and the next 50 features are sampled from two Gaussian distributions with low rank covariance matrices. Features 151-200 in $X^A$ are sampled from two other Gaussian distributions with low rank covariance matrices, whereas in $X^B$, they are independent noise. Features 201-250 in $X^A$ are independent noise,  whereas in $X^B$, they are sampled from two other Gaussian distributions with low rank covariance matrices. 

Our goal is to identify the features 151-200 as the features that distinguish $X^A$ from $X^B$, and vice versa for features 201-250. We apply DiSC with $d_A=d_B=20$. The first two significance values of the differential features are high and then significance drop drastically (Fig.~\ref{fig:toy1}(c)), indicating that the first two differential vectors are important in each dataset. These top two differential features for $X^A$ and $X^B$ are shown in Figure~\ref{fig:toy1}(d). Clearly, $(V_A)_1$ and $(V_A)_2$ highlight the features between 151-200 and more precisely, the two Gaussian mixtures are separately represented in each of these vectors. Similarly, $(V_B)_1$ and $(V_B)_2$ highlight the features between 201-250 and these represent the other two Gaussian mixtures. On the other hand, the diffusion maps eigenvectors, $U_A$ and $U_B$, captures all the connected components and not just the differential features as unique groups, as shown in Fig.~\ref{fig:toy1}(e).

\paragraph{Identifying subsets of connected features} \label{par:toy_problem2}
We generated two datasets $X^A$ and $X^B$ with $p=200$ features and $n_A=n_B=10,000$ with correlation between the features as shown in Fig.~\ref{fig:toy2}(a). In both the datasets, the first 100 features are correlated. In $X^A$, all the remaining features form a second correlated group, yielding two connected components in the feature space. In $X^B$ the remaining features are composed of two groups of correlated features, namely, feature indices 101-125 and 126-200. 

Here the goal is to identify these smaller subsets of correlated features in $X^B$, as this signifies an increase in the dimensionality of correlation structure compared to $X^A$. This also means that $X^B$ has richer feature information than $X^A$, therefore we have to identify that there are no differential features in $X^A$.

We use the significance level associated with the differential vectors to determine if the differential vectors are meaningful. We apply DiSC with $d_A=d_B=20$. Fig.~\ref{fig:toy2}(c) shows the differential vectors of $X^A$ and $X^B$ and the corresponding significance levels are shown in Fig.~\ref{fig:toy2}(b). The significance level associated with the differential vectors of $X^A$ is negligible compared to that of $X^B$, indicating that feature information in $X^A$ is contained in $X^B$. In addition, the difference between the significance of the first two differential vectors of $X^B$ is large. Therefore, the first differential vector of $X^B$ captures the major difference, and it groups feature indices 101-125 and 126-200 with positive and negative values respectively (Fig.~\ref{fig:toy2}(c)).  In contrast, diffusion maps captures the two connected components in $X^A$ and three connected components in $X^B$, and not just the differential components (Fig.~\ref{fig:toy2}(d)).     

\begin{figure}
    \centering
    \includegraphics[width=\textwidth]{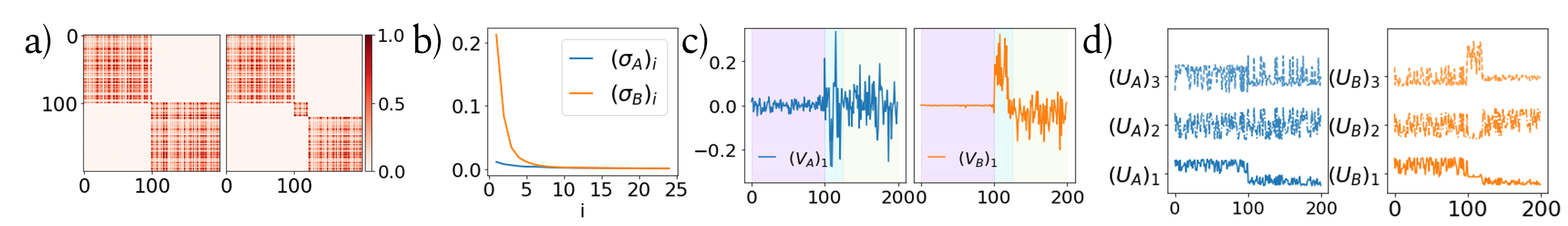}
     \vspace{-0.7cm}
    \caption{ a) Correlation matrices. b) Significance levels. c)~ Differential features. d) Diffusion maps. }
    \label{fig:toy2}
\end{figure}

\subsection{MNIST}\label{sec:mnist}
The MNIST dataset \citep{lecun-mnisthandwrittendigit-2010} consists of images of hand written digits from 0 to 9, each of dimension $28\times28$ pixels. It has 60k training samples and 10k testing samples. We used Algorithm~\ref{alg_multiple}, DiSC on multiple datasets, on the MNIST dataset to extract differential features for each of the digits: features present in a single digit but are not present in each of the other digits. To have a quantitative metric to measure the significance of these differential features, we develop a classifier for a 10-class classification problem. For this, we use K-means clustering on the differential features for each digit (K=10) thus deriving clusters of pixels that together differentiate between digits. We then compute a meta-feature by averaging pixel values for each cluster. Logistic regression is performed on these meta features to distinguish between the 10 classes. 

We compare our approach with Diffusion maps, Elastic Net (EN) and Elastic Net - logistic. For diffusion maps, we replaced the differential features with diffusion maps and followed the same procedure. For EN and EN-logistic, we obtained the feature importance vectors by training a classifier to distinguish between the 10-classes. K-means clustering is performed on these feature importance vectors. This cluster information is further used to compute meta-feature and finally for classification. Additionally, the entire data (784 features) is used to obtain the clusters using K-means clustering which are further used to compute classification accuracy following the procedure mentioned above. We consider this as a baseline. The results are tabulated below in Table~\ref{tab:mnist_multiclass}. 

We can see that our proposed method has the best performance, as it can capture the important differential information needed for classification. Since diffusion maps capture both shared and differential features, and entire data has all the information, their performance is poor compared to DiSC. Elastic Net has the least accuracy because it is designed for a regression setup but this is a classification problem. Elastic Net-logistic has better performance over Elastic Net as we use a cross entropy loss function – the one used for classification setup. Further, DiSC has better performance than Entire Data as it is designed to capture the differential features. However, DiSC outperformed EN-logistic as it is designed to extract groups of differential features unlike EN-logistic which just extracts all the differential features as a single group, especially when those features have similar effect on the classification problem.  Note in App.~\ref{appendix:experiments} we include an additional experiment on pairwise classification of MNIST digits which also includes more details about the choice of hyper-parameters for each of these methods.

\begin{table}[h]
       \centering
      \begin{tabular}{|c|c|c|c|c|c|}
      \hline
            & Entire Data          & Diffusion Maps     & EN     & EN-logistic & DiSC (Ours) \\ \hline
            Test Accuracy($\%$ )     & 76.23                 & 87.42            & 69.96        & 82.42  & \textbf{89.75}\\ \hline
      \end{tabular}

       \caption{Classification performance on MNIST using DiSC and other competitive approaches.}
       \label{tab:mnist_multiclass}
   \end{table}

\vspace{-1cm}
\subsection{Hyperspectral imagery}\label{sec:experiments_hyperspectral}
We apply DiSC to a hyperspectral imagery change detection dataset \citep{eismann2007hyperspectral}, consisting of hyperspectral images of a particular scene that were captured during different weather conditions, lighting conditions, and across four different months of the year (August, September, October and November). 
Each hyperspectral image is captured at a resolution of $100\times100$ pixels and 124 different bands. We denote these by $X^\textrm{aug},X^\textrm{sep},X^\textrm{oct} $ and $X^\textrm{nov}$. These images consist of a metal frame, grass and trees in the background, as shown in Fig.~\ref{fig:hyper_original}. 
An additional hyperspectral image, referred to as `October-change' and denoted $X^\textrm{oct-c}$, was captured in October, in which an added object, a \emph{tarp}, was included in the scene, see Fig.~\ref{fig:hyper_original}.
The goal is to detect the tarp as an added object, albeit various other aforementioned different conditions during which the image is captured. 
\begin{figure}
    \centering
    \includegraphics[width=\textwidth]{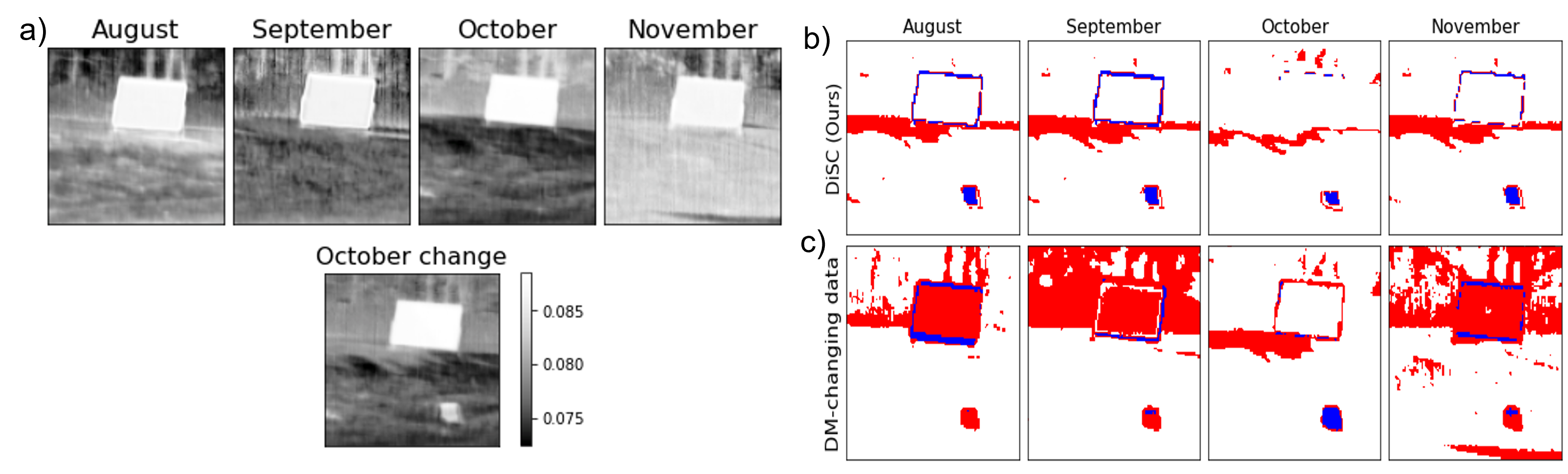}
    \caption{Hyperspectral imaging averaged across channels (a). Three clusters of features (white, red and blue) formed from DiSC (b) and Diffusion Maps for changing data (c).}
    \label{fig:hyper_original}
\end{figure}

We consider $X^A \in \{X^\textrm{aug},X^\textrm{sep},X^\textrm{oct} $,$X^\textrm{nov} \}$ and $X^B=X^\textrm{oct-c}$ and compute the differential vectors for these four pairs of datasets. Note that this is similar to the toy problem presented in Sec.~\ref{par:toy_problem2} and the theoretical analysis 
in Sec.~\ref{sec:double_sbm}, where a group of correlated features in $X^A$ (the pixels belonging to the grass) are divided into two groups of correlated features in $X^B$ (grass and tarp). With the addition of the tarp, the pixels belonging to the tarp remain correlated but their connectivity with the other pixels in the grass is lost. The differential features of $X^B$ identify the tarp, see App.~\ref{appendix:experiments} Fig.~\ref{fig:hyper_oct_octc}.

We pick the top four significant differential vectors of $X^B$ and perform k-means clustering on these with k=3. Figure~\ref{fig:hyper_original}(b) shows these three feature clusters for the four pairs of datasets. The tarp is revealed as a dominant cluster in all months. We compare our results with Diffusion maps for changing data (DM-changing data) \citep{coifman2014diffusion}, a spectral approach designed to capture differences between two conditions, which introduces a distance metric that measures the distance between diffusion maps calculated on a dataset that changes over time. We compute this pixel-wise distance between $X^A$ and $X^B$ and cluster the pixels into 3 groups based on this distance. These clusters are illustrated in Figure.\ref{fig:hyper_original}(c). This approach is much more affected by the acquisition conditions, For example: weather and lighting, than our approach, as it groups additional objects that have not changed along with the tarp. Finally, DiSC performs better than DM-changing data in the presence of added noise (see Appendix~\ref{appendix:experiments}). 
\vspace{-6pt}

\subsection{fMRI}

\begin{wrapfigure}[20]{r}{0.5\textwidth}
\vspace{-20pt}
  \begin{center}
    \includegraphics[width=0.48\textwidth]{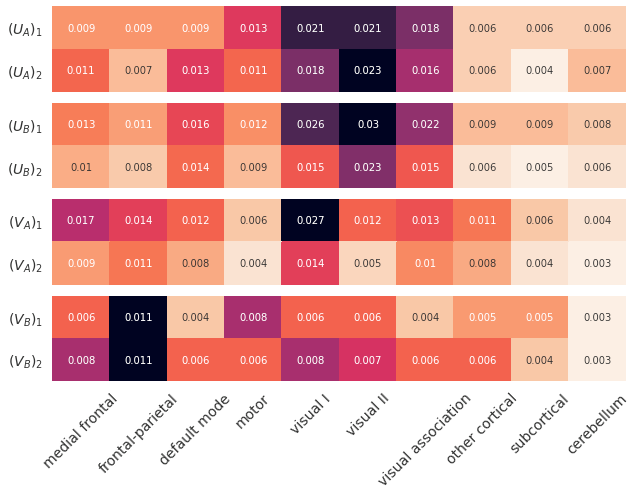}
  \end{center}
  \caption{Canonical correlations of subject-averaged diffusion maps and differential features with 10 canonical brain networks for fMRI data of a 0-back (task $A$) and 2-back (task $B$) working memory tasks.}
  \label{fig:fmri}
\end{wrapfigure}

We assess the performance of DiSC in identifying groups of brain parcels with correlated BOLD activity in a working memory task from the Human Connectome Project~\cite{van2013wu}, where subjects executed interleaved blocks of 0-back and 2-back working memory tasks.
In these tasks, subjects are instructed to monitor a sequence of visual items and to respond whenever a presented item is the same as the one previously presented 2 items ago (2-back) or a predetermined item (0-back). 
This fMRI dataset consists of 515 subjects (subjects with high motion or incomplete data were removed), and a whole-brain, functional atlas~\cite{shen2013groupwise} was used to extract time-courses from $p=268$ brain parcels.
Dataset $X^A$ is composed of all blocks from the 0-back task and $X^B$ is composed of all blocks from the 2-back task.

For each subject we calculated the top 2 diffusion maps eigenvectors as well as the top 2 differential features for each dataset, and average these across all subjects.
Fig.~\ref{fig:fmri} displays the correlation of each of the averaged vectors with indicator vectors for 10 canonical brain networks~\cite{finn2015functional}.
Results show that diffusion maps is mainly correlated with visual networks, while the differential vector for the 0-back task being most correlated with the visual II network. On the other hand, the differential vector for the 2-back task is most correlated with the frontal-parietal network, which has been shown to be predictive of working memory performance~\cite{avery2020distributed}. Thus, as opposed to diffusion maps, DiSC reveals that the 2-back task incorporates more high-level cognitive regions (e.g., prefrontal) compared to the 0-back task which has lower cognitive load~\cite{gao2019hierarchical,gao2021nonlinear}.
\vspace{-6pt}

\section{Discussion and future work}\label{sec:conclusion}
In this paper we introduced DiSC, a spectral approach for finding differential features between two or more datasets. 
We demonstrate the results of our model on various synthetic and real-world datasets and show that DiSC extracts better differential features as compared to the competing techniques. We also show the experimental results on more than two datasets.
One limitation of our method is that it addresses only differences in ``connectivity", or correlation, between features, not the feature values themselves. Another limitation is the choice of the hyperparameters, $d_A$ and $d_B$. High values would result in extracting noise or nuisance features as the differential features, and low values might not detect the essential differential features. 
Finally, the problem of ``redundant" eigenvectors~\citep{dsilva2018parsimonious} arising in spectral clustering and manifold learning may further complicate choosing the correct dimensionality. 
This can be mitigated by using non-redundant eigenvectors~\citep{blau2017non} which we will explore in future work.

\section{Acknowledgements}
Data were provided in part by the Human Connectome Project, WU-MinnConsortium (Principal Investigators: David Van Essen and Kamil Ugurbil;1U54MH091657) funded by the 16 NIH Institutes and Centers that support the NIH Blueprint for Neuroscience Research.
The authors thank Siyuan Gao for preprocessing of the fMRI dataset and for valuable discussions.



\nocite{*}

\appendix

\section{Choice of hyperparameters}
\label{app:hyperparam}
The Disc algorithm requires the following hyperparameters: (i) The bandwidth for the kernel functions $K_A(X^A_{\cdot i},X^A_{\cdot j}), K_B(X^B_{\cdot i},X^B_{\cdot j})$, see Eq. \ref{eq:kernel_functions}, and (ii) The number of significant eigenvectors computed for $G_A$ and $G_B$, denoted $d_A$ and $d_B$, respectively.

\paragraph{Self-tuning bandwidth}
For computing the weight matrices, we use the  self-tuning bandwidth from \cite{zelnik2004self} where the bandwidth for an RBF kernel 
 is given by $K(x_i,x_j) = \exp(\|x_i-x_j\|/\sigma_i\sigma_j) $. The local bandwidth $\sigma_i$ for each node is set to the distance to its $k$-th nearest neighbor, as suggested in \cite{zelnik2004self} and as is common in practice. The rule of thumb for choosing  $k$ is around  $\log(p)$ where $p$ is the number of features. 

\paragraph{Determining $d_A$ and $d_B$}

The notation $d_A,d_B$ represent the number of significant eigenvectors present in the random walk matrices of $P_A,P_B$ respectively. It is important to note that if $d_A,d_B$ are two small, the leading singular vectors of $P_AQ_B$ and $P_BQ_A$ will include elements of the shared latent space. The results will not change dramatically, however, if the choice of $d_A,d_B$ is higher than the optimum. For example, in the experiment in section 4.1 “Identifying newly connected features”, when we consider 
$d_A=d_B<4$, the shared latent space of features between 100-150 are also highlighted in 
 $V_A,V_B$ which is undesirable. However, we can increase the values of $d_A,d_B$ up to around 150, with very little impact on the results. For the MNIST data, we computed differential features between digits 4 and 9 with various values for $d_A=d_B=d$  and followed the procedure mentioned in the paper to compute the classification accuracy. These results are given in Table \ref{tab:hyperparameters}. We can see that for very small values of $d$
, the accuracy is lower since the classifier is partially trained on information about shared features. Here as well, there is a wide range of values (between 20-40) that yield similar results. 
\begin{table}[ht]
    \centering
    \begin{tabular}{|c|c|c|c|c|c|c|}
     \hline
    $d_A = d_B = d$     & 10 & 20 & 30 & 40 & 50 & 60 \\ \hline
     Test Accuracy    & 95.2\% & 96.5 \% & 96.5\% & 96.4\% & 94.7 \% & 88.5\%  \\
     \hline
    \end{tabular}
    \caption{Impact of the choice of hyperparameter $d$ on the classifier accuracy for pairs of MNIST digits.}
    \label{tab:hyperparameters}
\end{table}



\section{Proof of lemmas \ref{lem:expected_wb} and \ref{lem:concentation_v_gamma}}\label{app:proofs}

\subsection{Preliminaries}
\paragraph{The Davis-Kahan Theorem}
In our proof, we make a repeated use of the Davis-Kahan theorem. We apply both the classic theorem, and a useful variant derived in \cite{yu2015useful}. 
\begin{theorem}[\cite{yu2015useful}, Theorems 1 and 2]\label{thm:davis_kahan}
Let $W$ be a symmetric matrix with eigenvectors $v_1,\ldots,v_n$ and corresponding eigenvalues $\lambda_1 \geq \lambda_2,\ldots,\lambda_n$.
We denote by $\widetilde{W}$ a perturbation of $W$, with eigenvectors $\tilde v_1,\ldots,\tilde v_n$ and corresponding eigenvalues $\tilde  \lambda_1 \geq \tilde \lambda_2, \ldots,\tilde \lambda_n$.
Let $T_W,T_{\widetilde{W}}$ be the projection matrices onto a subspace spanned by the leading $d$ eigenvectors,
\[
T_W = \sum_{i = 1}^d v_iv_i^T \qquad T_{\widetilde{W}} = \sum_{i = 1}^d \tilde v_i\tilde v_i^T.
\]
In addition, we define $\delta$ via,
\[
\delta = \min_{j\leq d; i >d}|\tilde \lambda_i-\lambda_j|.
\]
Then,
\[
\|T_W-T_{\widetilde{W}}\| \leq \|W-\widetilde{W}\|/\delta.
\]
Alternatively, 
\[
\|T_W-T_{\widetilde{W}}\| \leq 2\sqrt{d}\|W-\widetilde{W}\|/(\lambda_{k}-\lambda_{k+1}).
\]
\end{theorem}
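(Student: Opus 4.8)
The plan is to establish both inequalities through the classical Davis--Kahan $\sin\Theta$ machinery, reducing the projection difference to a Sylvester equation whose solution is controlled by the spectral gap. Throughout I would write $V_0=[v_1,\dots,v_d]$ and $V_1=[v_{d+1},\dots,v_n]$ for the leading and trailing eigenvectors of $W$, and $\tilde V_0,\tilde V_1$ analogously for $\widetilde W$, so that $T_W=V_0V_0^T$ and $T_{\widetilde W}=\tilde V_0\tilde V_0^T$. The first step is the standard observation that, since $T_W$ and $T_{\widetilde W}$ are orthogonal projections of equal rank $d$, the eigenvalues of $T_W-T_{\widetilde W}$ occur as $\pm\sin\theta_i$ together with zeros, where $\theta_i$ are the principal angles between $\mathrm{range}(V_0)$ and $\mathrm{range}(\tilde V_0)$. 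Hence $\|T_W-T_{\widetilde W}\|=\max_i\sin\theta_i=\|V_0^T\tilde V_1\|$, and it suffices to bound the block $S:=V_0^T\tilde V_1$.

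Next I would set up the Sylvester equation. Using $WV_0=V_0\Lambda_0$ with $\Lambda_0=\mathrm{diag}(\lambda_1,\dots,\lambda_d)$ and $\widetilde W\tilde V_1=\tilde V_1\tilde\Lambda_1$ with $\tilde\Lambda_1=\mathrm{diag}(\tilde\lambda_{d+1},\dots,\tilde\lambda_n)$, multiplying $W-\widetilde W$ on the left by $V_0^T$ and on the right by $\tilde V_1$ gives
\[
V_0^T(W-\widetilde W)\tilde V_1=\Lambda_0 S-S\tilde\Lambda_1=:C.
\]
The crux is to invert the Sylvester operator $\mathcal S:S\mapsto\Lambda_0 S-S\tilde\Lambda_1$ with an \emph{operator-norm} bound. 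Assuming the natural ordering $\lambda_d>\tilde\lambda_{d+1}$, so that the spectra of $\Lambda_0$ and $\tilde\Lambda_1$ are separated by $\delta$, I would center both matrices by $c=(\lambda_d+\tilde\lambda_{d+1})/2$ and use the integral representation $S=\int_0^\infty e^{-t(\Lambda_0-cI)}\,C\,e^{t(\tilde\Lambda_1-cI)}\,dt$, which one verifies solves $\mathcal S(S)=C$ by differentiating the integrand. Submultiplicativity together with $\|e^{-t(\Lambda_0-cI)}\|,\|e^{t(\tilde\Lambda_1-cI)}\|\le e^{-t\delta/2}$ then yields $\|S\|\le\|C\|\int_0^\infty e^{-t\delta}\,dt=\|C\|/\delta$. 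Since $\|C\|\le\|W-\widetilde W\|$ (multiplying by matrices with orthonormal columns does not increase the spectral norm), this delivers the first bound $\|T_W-T_{\widetilde W}\|\le\|W-\widetilde W\|/\delta$.

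For the second bound I would replace the data-dependent gap $\delta$ by the population gap $\lambda_k-\lambda_{k+1}$ (here $k=d$) through a two-case argument, which is exactly the device making the \cite{yu2015useful} variant convenient. If $\|W-\widetilde W\|\ge(\lambda_d-\lambda_{d+1})/2$, then the claimed right-hand side is at least $\sqrt d\ge1\ge\|T_W-T_{\widetilde W}\|$ and the inequality is trivial. Otherwise Weyl's inequality gives $|\tilde\lambda_{d+1}-\lambda_{d+1}|\le\|W-\widetilde W\|$, so that $\delta\ge\lambda_d-\tilde\lambda_{d+1}\ge(\lambda_d-\lambda_{d+1})-\|W-\widetilde W\|>(\lambda_d-\lambda_{d+1})/2$, and substituting into the first bound gives $\|T_W-T_{\widetilde W}\|<2\|W-\widetilde W\|/(\lambda_d-\lambda_{d+1})\le 2\sqrt d\,\|W-\widetilde W\|/(\lambda_d-\lambda_{d+1})$. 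The extra $\sqrt d$ is the harmless price of matching the unitarily invariant form of \cite{yu2015useful}, where $\|C\|$ is bounded through $\|(W-\widetilde W)V_0\|_F\le\sqrt d\,\|W-\widetilde W\|$.

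The main obstacle, and the only genuinely delicate point, is the operator-norm inversion of the Sylvester operator: the naive entrywise identity $S_{ij}=C_{ij}/(\lambda_i-\tilde\lambda_{d+j})$ controls individual entries but does not bound $\|S\|$, since Hadamard scaling can inflate the spectral norm. The resolvent/integral representation is precisely what upgrades the entrywise separation into a clean $1/\delta$ operator-norm estimate, and it covers every unitarily invariant norm at once because it uses only submultiplicativity and the operator norms of the two exponential factors. The remaining work is bookkeeping: confirming the ordering $\lambda_d>\tilde\lambda_{d+1}$ in the nontrivial branch (guaranteed by the small-perturbation case via Weyl) and checking convergence of the integral, both routine once $\delta>0$; the fully general interleaved case, should it arise, follows from the original Davis--Kahan theorem proved by the same resolvent argument over a separating contour.
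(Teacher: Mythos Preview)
The paper does not prove this theorem: it is stated in the appendix's Preliminaries subsection as a known result quoted from \cite{yu2015useful}, with only a brief remark distinguishing the classical inequality from the variant. There is therefore no paper proof to compare against.

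Your proof sketch is correct and follows the standard route: reduce $\|T_W-T_{\widetilde W}\|$ to the off-diagonal block $S=V_0^T\tilde V_1$, derive the Sylvester equation $\Lambda_0 S-S\tilde\Lambda_1=V_0^T(W-\widetilde W)\tilde V_1$, and invert it in operator norm via the integral representation to obtain the $1/\delta$ bound; then recover the population-gap version by Weyl's inequality and a two-case split. One small caveat worth flagging explicitly is that the integral representation you wrote requires $\Lambda_0-cI\succ 0$ and $\tilde\Lambda_1-cI\prec 0$, i.e.\ the spectra of $\Lambda_0$ and $\tilde\Lambda_1$ lie on opposite sides of $c$; as you note at the end, the general separated-but-interleaved case needs the contour-integral (resolvent) form of the argument rather than the real exponential one. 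Since the paper only \emph{uses} the theorem in settings where the leading block dominates, your restricted derivation already covers every application that appears downstream.
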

The first inequality is the classic Davis-Kahan theorem and the second is its variant derived in \cite{yu2015useful}. The importance of the variant is that it bounds the eigenvector perturbation as a function of the eigenvalues of original matrix $W$, with no dependency on the eigenvalues of the perturbed matrix, which are unknown in many cases. In addition, 
the bound on the projection matrices $T_W-T_{\widetilde{W}}$ can be replaced with a bound on the difference in norm between the subspace of eigenvectors, see for example Corollary 3 in \cite{yu2015useful}. For a single vector we have 
\begin{equation}
    \|\tilde v_i-v_i \| \leq \frac{2^{3/2}\|W-\widetilde{W}\|}{\min (\lambda_{i-1}-\lambda_{i},\lambda_{i}-\lambda_{i+1})}.
\end{equation}

\subsection{Concentration of weight matrix for stochastic block models}
Another useful result is the concentration, in spectral norm, of a weight matrix generated according to the stochastic block model. 
This result follows directly from Bernstein's inequality for sums of independent matrices with bounded norm. This derivation is clearly presented, for example, in \cite{vershynin2018high}
\begin{lemma}\label{lem:sbm_concentration}
Let $W\in \mathbb R^{l \times l}$ be a matrix generated by the stochastic block model as in Eq. \eqref{eq:sbm}. Then, 
\[
\|W - \mathbb E[W]\| = C\sqrt{l} \qquad \text{with probability} \qquad 1-\exp(-l).
\]
\end{lemma}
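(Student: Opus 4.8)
\medskip
\noindent\textbf{Proof plan (Lemma~\ref{lem:sbm_concentration}).} The plan is to reduce the statement to the standard $\varepsilon$-net bound for the spectral norm of a random matrix with independent bounded entries, as presented in~\cite{vershynin2018high}. Write $M:=W-\mathbb E[W]$. Since $W$ is symmetric and its entries on and above the diagonal are the independent Bernoulli variables of~\eqref{eq:sbm} (each with parameter $p$ or $q$ according to the block the pair belongs to), the entries of $M$ on and above the diagonal are independent, mean zero, and bounded by $1$ in absolute value, with variances $p(1-p)$ or $q(1-q)$, both at most $1/4$. Thus $p,q$ enter the analysis only through these variances and ranges, all of which are $\Theta(1)$, so the claimed bound will hold with an absolute constant (which one may relabel $C(p,q)$).

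First I would discretize: fix a $1/4$-net $\mathcal N$ of the unit sphere $S^{l-1}$ with $|\mathcal N|\le 9^{l}$ and use the elementary fact that $\|M\|\le 2\max_{x\in\mathcal N}|x^\top M x|$ for symmetric $M$, turning the problem into a union bound over quadratic forms. Next I would control $x^\top M x$ for a \emph{fixed} unit vector $x$: writing $x^\top M x=\sum_{i\le j}c_{ij}M_{ij}$ with $c_{ij}=2x_ix_j$ for $i<j$ and $c_{ii}=x_i^2$, the normalization $\sum_i x_i^2=1$ gives $\sum_{i\le j}c_{ij}^2=4\sum_{i<j}x_i^2x_j^2+\sum_i x_i^4=2-\sum_i x_i^4\le 2$. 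Hence $x^\top M x$ is a sum of independent, centered random variables each ranging over an interval of length $\le|c_{ij}|$, and Hoeffding's inequality (or Bernstein's, the scalar bound underlying the matrix version quoted in the excerpt) yields $\Pr(|x^\top M x|\ge s)\le 2\exp(-s^2)$. A union bound over $\mathcal N$ then gives
\[
\Pr\bigl(\|M\|\ge t\bigr)\ \le\ 2\cdot 9^{l}\exp\!\bigl(-t^2/4\bigr),
\]
and taking $t=C\sqrt l$ for a large enough absolute constant $C$ makes the right-hand side at most $\exp(-l)$, which is the assertion.

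The step that needs care — and the reason I would route through the net rather than quoting the matrix Bernstein tail directly — is that a black-box application of matrix Bernstein to $M=\sum_{i\le j}M_{ij}(e_ie_j^\top+e_je_i^\top)$ has matrix variance of order $l$ and therefore only yields $\|M\|=O(\sqrt{l\log l})$ with polynomially small failure probability, short of the clean $O(\sqrt l)$ with probability $1-\exp(-l)$ that the downstream argument (Lemma~\ref{lem:concentation_v_gamma} and Theorem~\ref{thm:guarantee}) needs; the coefficient estimate $\sum_{i\le j}c_{ij}^2\le 2$ is precisely what restores the cancellation among the independent entries. An equivalent, equally standard alternative would be to combine the sharp expectation bound $\mathbb E\|M\|\le C_0\sqrt l$ for symmetric matrices with independent sub-gaussian entries with Talagrand's convex-concentration inequality for the bounded entries; either route gives the stated bound with constants depending only on $p$ and $q$.
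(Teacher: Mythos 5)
Your proof is correct, and it is in fact more substantive than the paper's own treatment: the paper does not really prove Lemma~\ref{lem:sbm_concentration}, but merely asserts that it ``follows directly from Bernstein's inequality for sums of independent matrices with bounded norm'' and points to Vershynin's book. Your route --- a $1/4$-net $\mathcal N$ of the sphere with $|\mathcal N|\le 9^l$, the comparison $\|M\|\le 2\max_{x\in\mathcal N}|x^\top M x|$ for symmetric $M$, Hoeffding for the fixed quadratic form using $\sum_{i\le j}c_{ij}^2\le 2$, and a union bound --- is the standard argument for matrices with independent bounded (sub-gaussian) entries, and it delivers exactly the stated conclusion $\|W-\mathbb E[W]\|\le C\sqrt l$ with failure probability $\exp(-l)$, with $C$ an absolute constant since $p,q$ enter only through variances and ranges bounded by $1$. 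Your cautionary remark about matrix Bernstein is also well taken and identifies a real gap in the paper's one-line justification: decomposing $M$ into entrywise summands gives matrix variance parameter of order $l$, so the matrix Bernstein tail $2l\exp(-ct^2/l)$ yields only $O(\sqrt{l\log l})$ at polynomially small failure probability, and at the $1-\exp(-l)$ confidence level it gives nothing better than the trivial $O(l)$; hence a net-based (or Talagrand/convex-concentration) argument such as yours is genuinely needed to obtain the exponential-probability statement the downstream results (Lemma~\ref{lem:concentation_v_gamma} and Theorem~\ref{thm:guarantee}) rely on. Two cosmetic points only: the lemma writes ``$= C\sqrt l$'' where ``$\le C\sqrt l$'' is meant, and in the application the matrix has size $(2l+s)\times(2l+s)$ rather than $l\times l$; neither affects your argument.
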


The rank of the expected weight matrix $\mathbb E[W]$ is equal to the number of communities in the model.   
Assume $d$ communities and let $T_W$ and $T_{\mathbb E[W]}$ denote the projection matrices onto the leading $d$ eigenvectors of $W$ and $\mathbb E[W]$ respectively. Let $\lambda_d$ denote the $d$-th eigenvalue of $\mathbb E[W]$.
Combining Lemma \ref{lem:sbm_concentration} and  Theorem \ref{thm:davis_kahan} yields the following  perturbation bound,
\begin{equation}\label{eq:sbm_general}
\|T_W-T_{\mathbb E[W]}\| \leq \frac{C\sqrt{dl}}{\lambda_d}.
\end{equation}
where $C$ is a constant that does not depend on the parameters of the model.

\subsection{Auxiliary lemmas}
For the lemmas in this subsection we have the following notation.
Let $W_A,W_B \in \mathbb R^{(2l+s) \times (2l+s)}$ be random weight matrices obtained via the stochastic block model as described in Section \ref{sec:double_sbm}.
Let $Q_{W_A},Q_{\WB}$ denote two  projection matrices onto the complementary subspace of the leading eigenvectors of $W_A$ and $\WB$ respectively.

\begin{lemma}\label{lem:double_projection_diff}
We have the following bound on the numerator of Eq. \eqref{eq:dk_application}.
\begin{align*}
\|Q_{W_A} W_B Q_{W_A} - Q_{\mathbb E[W_B]}\mathbb E[W_B]Q_{\mathbb E[W_B]}\| \leq 
  C_1\sqrt{l} + C_2 \sqrt{\frac{s}{l}}\lambda_3
\end{align*}
\end{lemma}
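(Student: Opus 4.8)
The plan is to bound the left-hand side by a triangle inequality with two intermediate matrices, isolating (i) the fluctuation of $W_B$ around $\WB$, (ii) the fluctuation of the leading eigenspace of $W_A$ around that of $\WA$, and (iii) a purely deterministic term comparing how the block structure of $G_A$ sits inside that of $G_B$; items (i)--(ii) will produce the $C_1\sqrt l$ summand and item (iii) the $C_2\sqrt{s/l}\,\lambda_3$ summand. Concretely, writing $Q:=Q_{W_A}$, $\widehat Q:=Q_{\WA}$ (the projection onto the complement of the leading two eigenvectors of $\WA$), and $\bar Q:=Q_{\WB}$,
\[
\bigl\|Q W_B Q-\bar Q\,\WB\,\bar Q\bigr\|\le\bigl\|Q(W_B-\WB)Q\bigr\|+\bigl\|Q\,\WB\,Q-\widehat Q\,\WB\,\widehat Q\bigr\|+\bigl\|\widehat Q\,\WB\,\widehat Q-\bar Q\,\WB\,\bar Q\bigr\|.
\]

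For the first term, $Q$ is an orthogonal projection so it does not expand the spectral norm, and Lemma~\ref{lem:sbm_concentration} applied to $W_B\in\mathbb R^{(2l+s)\times(2l+s)}$ (using $s<l$) gives $\|Q(W_B-\WB)Q\|\le\|W_B-\WB\|\le C\sqrt l$ with probability $1-\exp(-l)$. For the second term, Lemma~\ref{lem:sbm_concentration} for $W_A$ combined with the perturbation bound~\eqref{eq:sbm_general} for $d=2$ blocks gives $\|Q-\widehat Q\|\le C\sqrt{l}/\lambda_2(\WA)$; since the two nonzero eigenvalues of the rank-two matrix $\WA$ are each a $(p,q)$-constant times $l$, this is $O(1/\sqrt l)$, so $\|Q\,\WB\,Q-\widehat Q\,\WB\,\widehat Q\|\le 2\|Q-\widehat Q\|\,\|\WB\|=O(1/\sqrt l)\cdot O(l)=O(\sqrt l)$ (also w.p.\ $1-\exp(-l)$). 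These two estimates furnish the $C_1\sqrt l$ term.

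It remains to bound the deterministic quantity $\|\widehat Q\,\WB\,\widehat Q-\bar Q\,\WB\,\bar Q\|$. The key point is that $\WA$, $\WB$ and $\WB-\WA=(q-p)(e_\gamma e_\beta^{\top}+e_\beta e_\gamma^{\top})$ all have range inside the three-dimensional space $V:=\mathrm{span}\{e_\alpha,e_\beta,e_\gamma\}$, and that $\WA$ has rank two, so $\widehat Q\,\WA=0$ and $\widehat Q\,\WB\,\widehat Q=\widehat Q(\WB-\WA)\widehat Q$. Restricted to $V$, $\widehat Q$ is the rank-one projection onto the unit vector $\widehat w\propto s\,e_\beta-l\,e_\gamma$ (orthogonal to $e_\alpha$ and to $e_\beta+e_\gamma$) while $\bar Q$ is the rank-one projection onto $v_\gamma$; hence on all of $\mathbb R^{2l+s}$ one has $\widehat Q\,\WB\,\widehat Q=c_1\,\widehat w\widehat w^{\top}$ with $c_1=\widehat w^{\top}\WB\,\widehat w$, and $\bar Q\,\WB\,\bar Q=\lambda_3\,v_\gamma v_\gamma^{\top}$. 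A direct computation shows $\widehat w$ is within $O(\sqrt{s/l})$ of $\pm\tfrac1{\sqrt s}e_\gamma$, which together with Lemma~\ref{lem:expected_wb} gives $\|\widehat w\widehat w^{\top}-v_\gamma v_\gamma^{\top}\|\le C\sqrt{s/l}$; decomposing
\[
\widehat Q\,\WB\,\widehat Q-\bar Q\,\WB\,\bar Q=(c_1-\lambda_3)\,\widehat w\widehat w^{\top}+\lambda_3\bigl(\widehat w\widehat w^{\top}-v_\gamma v_\gamma^{\top}\bigr),
\]
the second summand contributes at most $C\sqrt{s/l}\,\lambda_3$, which is the target $C_2\sqrt{s/l}\,\lambda_3$ term.

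The main obstacle, which I would treat last, is the first summand: bounding the scalar gap $|c_1-\lambda_3|=|\widehat w^{\top}\WB\,\widehat w-\lambda_3|$ between the Rayleigh quotient of $\WB$ at $\widehat w$ and the eigenvalue carried by $v_\gamma$. The generic second-order estimate $|\widehat w^{\top}\WB\,\widehat w-\lambda_3|\le(\lambda_2(\WB)-\lambda_3)\sin^2\angle(\widehat w,v_\gamma)$ is too lossy here, since $\lambda_2(\WB)=\Theta(l)$ while $\sin^2\angle(\widehat w,v_\gamma)=\Theta(s/l)$; the delicate step is to use the explicit form~\eqref{eq:sbm} of $\WB$ -- in particular $\widehat w\perp\mathbf 1$ and the fact that, up to an $O(\sqrt{s/l})$ error, the eigenvectors of $\WB$ with eigenvalue $\Theta(l)$ lie in $\mathrm{span}\{e_\alpha,e_\beta\}$ -- so that the contribution of $\widehat w$'s overlap with those directions can be re-expressed against $\lambda_3$ and bounded by $C_2(p,q)\sqrt{s/l}\,\lambda_3$. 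Assembling the four contributions then yields $\|Q W_B Q-\bar Q\,\WB\,\bar Q\|\le C_1\sqrt l+C_2\sqrt{s/l}\,\lambda_3$ with probability $1-\exp(-l)$.
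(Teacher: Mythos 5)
Your decomposition is genuinely different from the paper's, and the difference is where the proof breaks. The paper never forms the intermediate object $Q_{\WA}\WB Q_{\WA}$: it substitutes $Q_{W_A}=Q_{\WB}+\mathcal E$ on one side and $Q_{\WB}=Q_{W_A}-\mathcal E$ on the other, so that the projection error $\mathcal E=Q_{W_A}-Q_{\WB}$ is only ever multiplied by $\WB Q_{\WB}$ (whose norm is exactly $\lambda_3$) or by $Q_{W_A}W_B$. Your route routes the comparison through $Q_{\WA}\WB Q_{\WA}$, and that object is genuinely far from $Q_{\WB}\WB Q_{\WB}$ on the scale the lemma requires.

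Concretely, the step you defer to the end cannot be carried out: the claim $|c_1-\lambda_3|\le C\sqrt{s/l}\,\lambda_3$ is false. With $\widehat w=(s\,e_\beta-l\,e_\gamma)/\sqrt{sl(s+l)}$ one has $\WB(s e_\beta-l e_\gamma)=(p-q)ls\,(e_\beta-e_\gamma)$, hence $c_1=\widehat w^{\top}\WB\widehat w=2(p-q)ls/(l+s)\to 2(p-q)s$, while the third eigenvalue of $\WB$ is $\lambda_3=\tfrac{(p-q)(p+2q)}{p+q}s\,(1+o(1))$. Thus $|c_1-\lambda_3|\to\tfrac{p(p-q)}{p+q}s=\tfrac{p}{p+2q}\lambda_3$, a \emph{constant fraction} of $\lambda_3$, i.e.\ $\Theta(s)$ --- much larger than both $\sqrt{l}$ and $\sqrt{s/l}\,\lambda_3=\Theta(s^{3/2}/\sqrt l)$ throughout the regime $\sqrt l\ll s\ll l$ of Theorem~\ref{thm:guarantee}. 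The source of the discrepancy is exactly the effect you flag as "delicate": $\widehat w$ has overlap $\Theta(\sqrt{s/l})$ with each of the two top eigenvectors of $\WB$, whose eigenvalues are $\Theta(l)$, contributing $\Theta(s)$ to the Rayleigh quotient; this contribution does not cancel against $\lambda_3$. Since $\widehat w$ and $v_\gamma$ are nearly parallel, this means the deterministic term $\|Q_{\WA}\WB Q_{\WA}-Q_{\WB}\WB Q_{\WB}\|$ in your triangle inequality is itself $\Theta(\lambda_3)$, so no refinement of the final estimate can close the gap within this decomposition. (Your computation does surface a genuine tension with the paper's own argument: Lemma~\ref{lem:bound_QW} asserts $\|Q_{\WA}e_\beta\|\le s/l$, whereas the correct value is $\sqrt{sl/(s+l)}\approx\sqrt s$, which inflates $\|Q_{\WA}(\WA-\WB)\|$ to $(p-q)\sqrt{sl}$ and the paper's third term to $\Theta(s)$ as well. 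But taken as a proof of the lemma as stated, your argument has a concrete false step at $|c_1-\lambda_3|$.)
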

\begin{proof}
We denote by $\mathcal E = Q_{W_A}-Q_{\WB}$. Applying the triangle inequality and the Cauchy-Schwarts inequality, 
\begin{align}\label{eq:bound_qwq}
    &\|Q_{W_A} W_B Q_{W_A} - Q_{\mathbb E[W_B]}\mathbb E[W_B]Q_{\mathbb E[W_B]}\| 
    \notag \\
    &=\|Q_{W_A}  W_B (Q_{\mathbb E[W_B]} + \mathcal E) - (Q_{W_A} -\mathcal E)\mathbb E[W_B]Q_{\mathbb E[W_B]}\|
    \\
    &\leq \| Q_{W_A}(W_B-\mathbb E[W_B])Q_{\mathbb E[W_B]}\|
    +\|\mathcal E \WB Q_{\mathbb E[W_B]}\| + \|Q_{W_A} W_B \mathcal E\|.\notag 
    \\
    & \leq 
    \| W_B-\mathbb E[W_B]\|
    +\|\mathcal E \| \|\WB Q_{\mathbb E[W_B]}\| + \|Q_{W_A} W_B \|\|\mathcal E\|.\notag
\end{align}
The term $\|\WB Q_{\mathbb E[W_B]}\|$ is equal by definition to the third eigenvalue of $\WB$.
We combine lemmas \ref{lem:sbm_concentration}, \ref{lem:bound_e} and \ref{lem:bound_QW} to get,
\[
\|Q_{W_A} W_B Q_{W_A} - Q_{\mathbb E[W_B]}\mathbb E[W_B]Q_{\mathbb E[W_B]}\| \leq 
C\sqrt{l} + \sqrt{\frac{s}{l}}\lambda_3 +C_2 \sqrt{s}.
\]
\end{proof}

\begin{lemma}\label{lem:bound_e}
The error of $\|\mathcal E\| = \|Q_{W_A}-Q_{\WB}\|$ is bounded by,
\[
\|\mathcal E\| \leq C\sqrt{\frac{s}{l}}. 
\]
\end{lemma}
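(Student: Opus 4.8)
The plan is to compare $Q_{W_A}$ and $Q_{\mathbb E[W_B]}$ through the \emph{deterministic} intermediate projector $Q_{\mathbb E[W_A]}$, the complement of the two leading eigenvectors of the expected matrix $\mathbb E[W_A]$, and to bound the two resulting pieces separately:
\[
\|\mathcal E\| \;\leq\; \|Q_{W_A}-Q_{\mathbb E[W_A]}\| \;+\; \|Q_{\mathbb E[W_A]}-Q_{\mathbb E[W_B]}\|.
\]
Writing each complementary projector as $Q = I - T$, with $T$ the rank-$2$ projector onto the corresponding leading eigenspace, each term equals $\|T-T'\|$ for a pair of such projectors, which is exactly the setting of the Davis--Kahan theorem (Theorem~\ref{thm:davis_kahan}).

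For the first (random) term, $\mathbb E[W_A]$ is a two-block expected matrix whose range lies in $\mathrm{span}\{e_\alpha,\,e_\beta+e_\gamma\}$; restricted to that two-dimensional space it acts as the $2\times 2$ matrix with first row $(pl,\,q(l+s))$ and second row $(ql,\,p(l+s))$, whose trace $p(2l+s)$ and determinant $(p^2-q^2)l(l+s)$ are both positive and of order $l$ and $l^2$. Hence its two eigenvalues are $\Theta(l)$ (using $s<l$, $p>q$), every remaining eigenvalue of $\mathbb E[W_A]$ is $O(1)$, and in particular $\lambda_2(\mathbb E[W_A]) = c(p,q)\,l$. Combining Lemma~\ref{lem:sbm_concentration} ($\|W_A - \mathbb E[W_A]\|\le C\sqrt{l}$ with probability $1-e^{-l}$) with the Davis--Kahan bound \eqref{eq:sbm_general} for $d=2$ gives $\|Q_{W_A}-Q_{\mathbb E[W_A]}\| \le C\sqrt{l}/\lambda_2(\mathbb E[W_A]) \le C'(p,q)/\sqrt{l} \le C'(p,q)\sqrt{s/l}$ (using $2l+s\le 3l$ and then $s\ge 1$).

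For the second (deterministic) term, $\mathbb E[W_A]$ and $\mathbb E[W_B]$ agree everywhere except on the $\beta\times\gamma$ and $\gamma\times\beta$ blocks --- those pairs are within-community in $A$ but between-community in $B$ --- so $\mathbb E[W_B]-\mathbb E[W_A] = -(p-q)(e_\beta e_\gamma^T + e_\gamma e_\beta^T)$, a rank-$2$ matrix of spectral norm $(p-q)\|e_\beta\|\,\|e_\gamma\| = (p-q)\sqrt{ls}$. Applying the Davis--Kahan variant (the second bound of Theorem~\ref{thm:davis_kahan}) with unperturbed matrix $\mathbb E[W_A]$, perturbation $\mathbb E[W_B]$, and $d=2$ --- so the relevant gap is $\lambda_2(\mathbb E[W_A])-\lambda_3(\mathbb E[W_A]) = \lambda_2(\mathbb E[W_A]) = \Theta(l)$ since $\mathbb E[W_A]$ has rank $2$ --- yields $\|Q_{\mathbb E[W_A]} - Q_{\mathbb E[W_B]}\| \le 2\sqrt{2}\,(p-q)\sqrt{ls}/\lambda_2(\mathbb E[W_A]) = C''(p,q)\sqrt{s/l}$. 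Equivalently one can compute the angle directly: the orthocomplement of the leading two-dimensional eigenspace of $\mathbb E[W_A]$ inside $\mathrm{span}\{e_\alpha,e_\beta,e_\gamma\}$ is the line through $s\,e_\beta - l\,e_\gamma$, that of $\mathbb E[W_B]$ is the line through $v_\gamma$, and Lemma~\ref{lem:expected_wb} gives $\|v_\gamma - \tfrac{1}{\sqrt s}e_\gamma\| \le C_1\sqrt{s/l}$; the sine of the angle between the first line and $e_\gamma$ is $\sqrt{s/(l+s)}$, so the total angle has sine $O(\sqrt{s/l})$. Adding the two bounds gives $\|\mathcal E\| \le C(p,q)\sqrt{s/l}$ with probability $1-e^{-l}$.

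The only genuinely delicate point is the second term. A priori $Q_{W_A}$ and $Q_{\mathbb E[W_B]}$ project out the leading directions of matrices with different block structures, and one must recognize that their leading two-dimensional eigenspaces are nonetheless aligned up to the $\Theta(\sqrt{s/l})$ rotation produced by carving the size-$(l+s)$ block of $G_A$ into the size-$l$ block $\beta$ and the size-$s$ block $\gamma$ of $G_B$. Getting the right rate hinges on pairing the rank-$2$, $\Theta(\sqrt{ls})$-norm perturbation against the $\Theta(l)$ spectral gap of the \emph{rank-$2$} matrix $\mathbb E[W_A]$ rather than the much smaller gap around $v_\gamma$; the eigenvalue bookkeeping for the $2\times 2$ and $3\times 3$ block-reduced matrices is otherwise routine.
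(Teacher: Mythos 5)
Your proof is correct and follows essentially the same route as the paper: the same triangle-inequality split through the deterministic projector $Q_{\mathbb{E}[W_A]}$, SBM concentration plus Davis--Kahan against the $\Theta(l)$ spectral gap for the random piece, and an $O(\sqrt{s/l})$ rotation between the leading eigenspaces of $\mathbb{E}[W_A]$ and $\mathbb{E}[W_B]$ for the deterministic piece. Your explicit Davis--Kahan treatment of the rank-$2$ perturbation $\mathbb{E}[W_B]-\mathbb{E}[W_A]$ (norm $(p-q)\sqrt{ls}$ against gap $\Theta(l)$) is just a self-contained version of the paper's appeal to the eigenvector comparison carried out in the proof of Lemma~\ref{lem:expected_wb}, which you also reproduce in your direct angle computation.
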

\begin{proof}
The triangle inequality yields,
\begin{align*}
\|Q_{W_A}-Q_{\WB}\| 
\leq \|Q_{W_A}-Q_{\WA}\| + \|Q_{\WA}-Q_{\WB}\|
\end{align*}
For the first term, we use the results for the standard stochastic block model of sizes $l$ and $l+s$.
The second (and smallest non-zero) eigenvalue of $\WA$ is larger than $(p-q)\frac{l}{2}$
\[
\|Q_{W_A}-Q_{\WA}\| \leq \frac{C\sqrt{l}}{(p-q)l} = \frac{C}{\sqrt{l}(p-q)}.
\]
The second term bounds the difference in projection matrices of the subspace spanned by the leading two eigenvectors of $\WA$ and $\WB$. In the proof of lemma \ref{lem:expected_wb} we show that the difference between the leading eigenvectors of the two matrices is bounded by $\sqrt{\frac{s}{l}}$ and that the second eigenvector is identical. It follows that,
\[
\|Q_{W_A}-Q_{\WB}\| \leq \frac{C}{\sqrt{l}(p-q)} + \sqrt{\frac{s}{l}}. 
\]
\end{proof}
\begin{lemma}\label{lem:bound_QW}
The value of $\|Q_{W_A} W_B\|$ is bounded by,
\[
\|Q_{W_A} W_B\| \leq C \sqrt{l}.
\]
\end{lemma}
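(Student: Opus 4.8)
The plan is to bound $\|Q_{W_A}W_B\|$ by writing $W_B = W_A + (W_B - W_A)$ and treating the two pieces separately, exploiting that $Q_{W_A}$ is itself a spectral projector of $W_A$. First I would note that $Q_{W_A} = I - u_\alpha u_\alpha^T - u_\beta u_\beta^T$ is the projector onto the orthogonal complement of the two leading eigenvectors of $W_A$, hence it commutes with $W_A$ and $Q_{W_A}W_A = \sum_{i\geq 3}\lambda_i(W_A)\,u_i u_i^T$ is exactly the spectral tail of $W_A$. Therefore $\|Q_{W_A}W_A\| = \max_{i\geq 3}|\lambda_i(W_A)|$. Since $\WA$ has rank two (graph $G_A$ has only two communities), Weyl's inequality gives $|\lambda_i(W_A)| = |\lambda_i(W_A) - \lambda_i(\WA)| \leq \|W_A - \WA\|$ for every $i\geq 3$, and by Lemma~\ref{lem:sbm_concentration} this is $O(\sqrt{l})$ with probability $1 - e^{-l}$. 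This already disposes of the $W_A$ piece with the correct $\sqrt{l}$ scaling and a constant depending only on $p,q$.

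Next, since $\|Q_{W_A}\| \leq 1$, the triangle inequality reduces the remaining work to bounding $\|W_B - W_A\|$, as $\|Q_{W_A}W_B\| \leq \|Q_{W_A}W_A\| + \|W_B - W_A\|$. The two stochastic block models are identical except in how they connect $\beta$ and $\gamma$: in $G_A$ the set $\beta\cup\gamma$ is a single community, whereas in $G_B$ it is split, so $W_B - W_A$ is supported only on the $\beta$--$\gamma$ and $\gamma$--$\beta$ off-diagonal blocks. I would then split $W_B - W_A = (\WB - \WA) + (W_B - \WB) - (W_A - \WA)$, where the centered fluctuation terms $W_B - \WB$ and $W_A - \WA$ are each $O(\sqrt{l})$ by Lemma~\ref{lem:sbm_concentration}, and the deterministic remainder is $\WB - \WA = -(p-q)(e_\beta e_\gamma^T + e_\gamma e_\beta^T)$, which records the change of those blocks from within-community probability $p$ to between-community probability $q$.

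The main obstacle is precisely this deterministic term $\WB - \WA$. Its norm is governed by the rank-two structure $e_\beta e_\gamma^T + e_\gamma e_\beta^T$, and matching its contribution to the target $C(p,q)\sqrt{l}$ is the delicate step, because this block carries a genuine mean shift rather than mere noise; one must track carefully how the size $s$ of the split-off group $\gamma$ enters relative to the community size $l$ and argue that the resulting spectral contribution remains at the $\sqrt{l}$ scale with a constant depending only on $p,q$. I would attempt this by isolating the block structure of $e_\beta e_\gamma^T + e_\gamma e_\beta^T$ and combining it with the concentration estimate of Lemma~\ref{lem:sbm_concentration}; if one prefers to phrase the whole argument with the deterministic projector, the closeness $\|Q_{W_A} - Q_{\WA}\| \leq C/\sqrt{l}$ established within the proof of Lemma~\ref{lem:bound_e} lets one replace $Q_{W_A}$ by $Q_{\WA}$ at negligible cost. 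Assembling the tail bound on $\|Q_{W_A}W_A\|$ with the bound on $\|W_B - W_A\|$ through the triangle inequality then yields $\|Q_{W_A}W_B\| \leq C\sqrt{l}$ with probability $1 - e^{-l}$.
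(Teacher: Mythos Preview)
Your plan mirrors the paper's proof almost step for step: split $W_B = W_A + (W_B - W_A)$; identify $\|Q_{W_A}W_A\|$ with the third eigenvalue of $W_A$ and bound it via Weyl's inequality and Lemma~\ref{lem:sbm_concentration}; split $W_B - W_A$ into the two centered fluctuations (each $O(\sqrt{l})$ by concentration) plus the deterministic shift $(p-q)(e_\beta e_\gamma^T + e_\gamma e_\beta^T)$; and treat the latter by swapping $Q_{W_A}$ for $Q_{\WA}$ using $\|Q_{W_A}-Q_{\WA}\|\le C/\sqrt{l}$. This is exactly the paper's route.

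There is, however, a bookkeeping gap. You discard the projector early, writing $\|Q_{W_A}W_B\| \le \|Q_{W_A}W_A\| + \|W_B - W_A\|$, and then try to bound $\|W_B - W_A\|$ itself. But the deterministic piece has $\|\WA - \WB\| = (p-q)\|e_\beta e_\gamma^T + e_\gamma e_\beta^T\| = (p-q)\sqrt{ls}$, which is \emph{not} $O(\sqrt{l})$ when $s$ grows with $l$; so the final sentence (``assembling\ldots with the bound on $\|W_B - W_A\|$'') does not follow from what you have written. The fix you suggest---swap $Q_{W_A}$ for $Q_{\WA}$---only has teeth if the projector is still attached to the difference, i.e.\ if you are bounding $\|Q_{W_A}(\WA - \WB)\|$ rather than $\|\WA - \WB\|$. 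The paper keeps $Q_{W_A}$ throughout: it writes
\[
\|Q_{W_A}(W_B - W_A)\| \le \|Q_{W_A}(\WA - \WB)\| + \|W_B - \WB\| + \|W_A - \WA\|,
\]
then passes to $Q_{\WA}$ only on the first (deterministic) term, exploiting that $Q_{\WA}$ subtracts the block average and hence shrinks $e_\beta$. The replacement cost $\|Q_{W_A}-Q_{\WA}\|\cdot\|\WA-\WB\|\le (C/\sqrt{l})(p-q)\sqrt{ls}=C(p-q)\sqrt{s}$ is indeed harmless, as you note. Once you retain the projector on the difference term, your outline and the paper's proof coincide.
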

\begin{proof}
We use the triangle inequality and Cauchy Schwartz to split the term into the following,
\begin{align}
\|Q_{W_A} W_B\| &\leq \|Q_{W_A} W_A\| + \|Q_{W_A} (W_A-W_B)\| \leq 
\|Q_{W_A} W_A\| + \|Q_{W_A} (\WA-\WB)\| \notag
\\
&+ \|Q_{W_A} (W_B-\WB)\| + \|Q_{W_A} (W_A-\WA)\|
\\
& 
\leq \|Q_{W_A} W_A\| + \|W_B-\WB\| + \|W_A-\WA\|  + \|Q_{WA}(\WA-\WB)\|\notag \\
&\leq 
\|Q_{W_A} W_A\| + \|W_B-\WB\| + \|W_A-\WA\|+\|Q_{\WA}(\WA-\WB)\| \notag \\
&+\|\WA-\WB\| \|Q_{\WA}-Q_{W_A}\|  \notag 
\end{align}
By lemma \ref{lem:sbm_concentration} (concentration of the norm of random matrix) The terms $\|W_B-\WB\|$ and $\|W_A-\WA\|$ are bounded by $C \sqrt{l}$. The term $\|Q_{W_A}W_A\|$ is equal by definition to the third eigenvalue of $W_A$. Recall that the third eigenvalue of $\WA$ is equal to zero. Thus, by weyl's inequality, the third eigenvalue of $W_A$ is bounded by
\[
\lambda_3(W_A) \leq  \|W_A-\WA\| \leq C \sqrt{l}.
\]
To bound the term $\|Q_{WA}(\WA-\WB)\|$ note that the matrix $\WA-\WB$ is deterministic, and equal to
\[
\WA-\WB = (p-q)e_\beta e_\gamma^T
\]
with a norm of $(p-q)\sqrt{sl}$. The operator $Q_{W_A}$ removes the average from the vector $v_\beta$ over the $l+s$ last elements. Thus, the norm of  $\|Q_{\WA}e_\beta\|$ is bounded by $s/l$, which implies that 
\[
\|Q_{\WA}(\WA-\WB)\| \leq (p-q) s^{3/2}/l.
\]
Finally, the last term is bounded by $C \frac{\sqrt{sl}}{(p-q)\sqrt{l}} = C\frac{\sqrt{s}}{p-q}$, Which is dominated by the first four terms.  
Summing up the different terms we have,
\[
\|Q_{W_A}W_B\| \leq C\sqrt{l} + (p-q) s^{3/2}/l.
\]
\end{proof}

\subsection{Proof of lemma \ref{lem:expected_wb}}

\begin{proof}

We define $e_\alpha$, $e_\beta$ and $e_\gamma$ as the binary block indicator vectors where $(e_\alpha)_i=1$ if $i \in \alpha$, $(e_\beta)_i=1$ if $i \in \beta$ and $(e_\gamma)_i=1$ if $i \in \gamma$. 

Let $E \in  \mathbb R^{(2l+s) \times 3}$ be a concatenation of $e_\alpha,e_\beta$ and $e_\gamma$.
We denote the pairwise block confusion matrix $\Theta \in [0,1]^{3 \times 3}$ given by,
\begin{equation}\label{eq:sbm_matrix}
\Theta_{ij} = 
\begin{cases}
p & i = j\\
q & i \neq j.
\end{cases}
\end{equation}
The expected weight matrix of the stochastic block model is equal to,
\[
\WB = E \Theta E^T.
\]

We denote by $\Delta\in R^{3 \times 3}$ a diagonal matrix with,
\[
\Delta_{11} = \sqrt{l} \quad \Delta_{22} = \sqrt{l} \quad \Delta_{33} = \sqrt{s}.
\]
The expected weight matrix of the stochastic block model is equal to 
\[
\WB = E\Theta E^T = (E \Delta^{-1}) (\Delta \Theta \Delta) (\Delta^{-1}E^T).
\]
The matrix $E \Delta^{-1}$ is orthonormal. The eigenvalues of $\WB$ are thus equal to the eigenvalues of $\Delta \Theta \Delta$ and the corresponding eigenvectors are equal to $E \Delta^{-1}$ multiplied by the eigenvectors of $\Delta\Theta\Delta$.
Consider the matrix $Z$ given by,
\[
Z_{ij} = \begin{cases}
(\Delta\Theta\Delta)_{ij} & i,j <3 \;\; \text{or} \;\; i=j=3\\
0 & o.w.
\end{cases}
\]
The eigenvectors of $Z$ are equal to $u_1 = [1,1,0]$  and $u_2 = [1,-1,0]$ with corresponding eigenvalues $(p+q)l$ and $(p-q)l$ respectively. 
We denote by $\tilde u_1,\tilde u_2,\tilde u_3$ the eigenvectors of $\Delta \Theta \Delta$. 
A direct computation shows that $\tilde u_2 = u_2$ with the same eigenvalue.
Applying Theorem \ref{thm:davis_kahan} yields, 
\begin{align}\label{eq:bound_u1_error}
&\|u_1 - \tilde u_1\| \leq \frac{2\sqrt{2}\|Z - \Delta\Theta\Delta\|}{2ql} = 
\frac{4\sqrt{2}q\sqrt{ls}}{2ql} = \sqrt{\frac{8s}{l}}.
\end{align}
Both $u_1,u_3$ and $\tilde u_1$, $\tilde u_3$ are orthogonal to $u_2=\tilde u_2$ and thus span the same 2D subspace. This implies
\[
\|\tilde u_3- u_3\| = \|u_1-\tilde u_1\| \leq  \sqrt{\frac{8s}{l}}.
\]
It follows directly that $v_\gamma$, the the third eigenvector of $\WB$
satisfies 
\[
\Big\|v_\gamma- \frac{1}{\sqrt{s}}e_\gamma\Big\| \leq \sqrt{\frac{8s}{l}}. 
\]
Finally, we derive a lower bound on the third eigenvalue of $\WB$. Recall that the eigenvalues of $\WB$ are equal to those of $\Delta \Theta \Delta$. For the latter we apply the inequality,
\[
\lambda_{\text{min}}(\Delta \Theta \Delta) \geq \lambda_{\text{min}}(\Delta) \lambda_{\text{min}}(\Theta) \lambda_{\text{min}}(\Delta),  
\]
where $\lambda_{\text{min}}$ denotes the smallest eigenvalue. The matrix $\Delta$ is diagonal with values $\sqrt{l},\sqrt{l},\sqrt{s}$ and hence $\lambda_{\text{min}}(\Delta) = \sqrt{s}$. By direct computation $ \lambda_{\text{min}}(\Theta)\geq (p-q)$. Hence,
\[
\lambda_3(\WB) = \lambda_{\text{min}}(\Delta \Theta \Delta) \geq s(p-q).
\]

\end{proof}

\subsection{Proof of Lemma \ref{lem:concentation_v_gamma}} 
In this lemma we bound the difference between the vector $\tilde v_\gamma$ and the  corresponding eigenvector of $\WB$, denoted by $v_\gamma$. 
Recall the definition of $Q_{\WB}$ as the projection matrix onto the complementary subspace of the two leading eigenvectors of $\WB$. 
By definition, $v_\gamma$ is the leading eigenvector of $Q_{\WB} \WB Q_{\WB}$. Thus, our goal is to bound the leading eigenvector of two matrices:
\[
Q_{\WB}\WB Q_{\WB} \quad \text{and} \quad Q_{\WA}W_BQ_{\WA}.
\]
To that end, we apply the Davis-Kahan theorem. Let $\lambda_\gamma$ be the third eigenvector of $\WB$. The theorem gives the following bound. 
\begin{align}\label{eq:dk_application}
\|v_\gamma-\tilde v_\gamma\|  
\leq2^{3/2}\frac{\|Q_{\WB}\WB Q_{\WB}  -Q_{\WA}W_BQ_{\WA}\|}{\lambda_3}.
\end{align}

Applying lemma \ref{lem:double_projection_diff} we get:
\[
\|v_\gamma - \tilde v_\gamma\|\leq C\frac{\sqrt{l}}{\lambda_3}+C_2\sqrt{\frac{s}{l}}
\]
For the first term, we apply the lower bound $\lambda_3 \geq (p-q)s$. This yields,
\[
\|v - \tilde v\| \leq C_1\frac{\sqrt{l}}{s} +  C_2\sqrt{\frac{s}{l}}
\]
In Section \ref{sec:validation_theory} we provide a numerical validation of Lemma \ref{lem:double_projection_diff} and the bound on $\lambda_3$.


\section{Alternative justification for our approach}
A different variation of the graph-cut criterion in \ref{eq:ratio_cut} is the Normalized cut. Let $d_i$ denote the degree of node $i$, and let $\textrm{vol}(\alpha) = \sum_i d_i$ be the sum of degrees of a subset of nodes $\alpha$. The normalized cut is equal to,
\[
\textrm{Ncut}(\alpha,\beta) = \sum_{i \in \alpha,j \in \beta} W(i,j)\Big( \frac{1}{\textrm{vol}(\alpha)} + \frac{1}{\textrm{vol}(\beta)\ }\Big).
\]
Similarly to \ref{eq:ratio_cut_vector}, we can define an indicator vector by,
\begin{equation}\label{eq:indicator_ncut}
f_i = 
\begin{cases}
\sqrt{\frac{\textrm{vol}(\beta)}{\textrm{vol}{\alpha}}} & i \in \alpha \\
\sqrt{\frac{\textrm{vol}(\alpha)}{\textrm{vol}{\beta}}} & i \in \beta.
\end{cases}
\end{equation}
The normalized cut can be expressed as,
\[
\textrm{Ncut}(\alpha,\beta) = f^T L f.
\]
Here, the indicator vector is orthogonal to $D \bm 1$, where $D$ is a diagonal matrix containing the degrees $\{d_i\}$ in its diagonal and $\bm 1$ is the all ones vector. In addition, though the norm of $f$ depends on $\alpha$, the term $f^T D f$ is a constant. Relaxing the requirement in Eq. \ref{eq:indicator_ncut}, yields the following optimization problem for the normalized cut,
\[
\min_f f^T L f \qquad \textrm{subject to} \qquad f^T D f = 1.
\]
In our work, we consider two graph Laplacians $L_A,L_B$. For simplicity we assume that the degree matrix of both graphs is identical. Thus, the indicator vector in Eq. \ref{eq:indicator_ncut} is the same for both graphs and for any partition. 
The second graph $L_B$ yields a second constraint on the optimization problem.
\begin{equation}\label{eq:optimization_ncut}
\min f^T L_A f  \quad \textrm{subject to} \quad f^T D f = 1, \quad f^T L_B f = \gamma. 
\end{equation}
The solution to Eq. \ref{eq:optimization_ncut} satisfies the generalized eigenvector problem 
\[
L_A f = \lambda_1(D+\lambda_2L_B)f.
\]
Multiplying both sides by $D^{-1}$ yields,
\begin{equation}\label{eq:reg_inverse}
(I + \lambda_2P_b)^{-1} P_A f = \lambda_1 f,
\end{equation}
where $P_A,P_B$ are the random walk Laplacian matrices. Note that the term $(I + \lambda_2 D^{-1}L_b)^{-1}$ is a \textrmit{regularized inverse} matrix of $P_B$, often used to ovoid an arbitrary increase of noise when computing the inverse of ill condition matrices. Thus, the expression in \ref{eq:reg_inverse} is very similar in nature to the expression in Eq. \ref{eq:generalized_eigenvector}.

\section{DiSC extension to multiple datasets}
\label{sec:disc_multiple}

The extension of our approach to multiple datasets (i.e. more than two) is straightforward. We consider $M$ datasets $X^m \in \mathbb R^{n_m \times p}$ for $m=1,..,M$. We compute random-walk transition matrices on the graphs by $P_m=D_m^{-1}W_m$. Let $U_m \in \mathbb R^{p \times d_m}$ be a matrix containing the $d_m$ leading right eigenvectors of $P_m$, and let $\hat{Q}_m$ be a projection matrix onto the the complementary subspace of  $\cup_{k\neq m}U_k$, given by
\begin{equation}\label{eq:projection_m}
\hat{Q}_m = I - \hat{U}_m (\hat{U}_m^T \hat{U}_m)^{-1} \hat{U}_m^T.
\end{equation}
where $\hat{U}_m = [U_1,...,U_{m-1},U_{m+1},...,U_M]$.
We then compute the differential vectors of $G_m$, denoted by $(V_m)_i$ and the corresponding significance level $(\sigma_m)_i$ using
\begin{equation}\label{eq:differentiating_vectors_m}
     (V_m)_i = 
        \argmax_{\textrmrm{dim} E=i} \min_{v\in S(E)} ||P_m\hat{Q}_mv||_2.
\end{equation}
\begin{equation}\label{eq:diff_vectors_significance_m}
     (\sigma_m)_i = ||P_m\hat{Q}_m(V_m)_i||_2.
\end{equation}
where $E$ is a subspace in $\mathbb R^{p}$ and $S(E)$ denotes the unit Euclidean sphere in $E$. An algorithm implementing our approach for multiple datasets is given in Alg.~\ref{alg_multiple}.

\begin{algorithm}[ht]
\caption{DiSC - Extension to multiple datasets}
\begin{flushleft}
\textrmbf{Input:} 
\begin{tabular}[t]{ll}
Datasets $X^m \in \mathbb R^{n_m \times p}$ for $k=1:M$ \\
Kernel functions $\{K_m(\cdot,\cdot)\}_{k=1:M}$  \\
Hyper parameters $\{d_m\}_{k=1:M}$ \\
\end{tabular}
\\
\textrmbf{Output:} Subsets of differentiating features $\{V_m\}_{m=1:M}$
\end{flushleft}
\begin{algorithmic}[1]
\State Compute graphs $G_m$ on the columns of $X^m$ with weights given by \ref{eq:kernel_functions}.
\State For all the graphs, compute the random walk matrix,
\[
P_m=D_m^{-1}W_m \qquad \textrm{for $m=1,...,M$}
\]
\State Calculate $\{U_m\}_{m=1:M}$, the leading right eigenvectors of $\{P_m\}_{m=1:M}$. 
\item Compute the projection matrices $\{\hat{Q}_m\}_{m=1:M}$ via Eq. \ref{eq:projection_m}. 
\State Compute differential vectors $\{V_m\}_{m=1:M}$ via Eq. \ref{eq:differentiating_vectors_m}.
\State Compute significance levels  $\{\sigma_m\}_{m=1:M}$ via Eq. \ref{eq:diff_vectors_significance_m}.
\State optional: Perform k-means over the rows of $V_A$, and $V_B$.
\end{algorithmic}
\label{alg_multiple}
\end{algorithm}

To demonstrate this extension, in the following example we generalize our approach to reveal differential features given three datasets $X_A, X_B$ and $X_C$. Some of the features connections are specific to each dataset and the remaining connections may be present in more than one dataset. 
Each dataset is consists of $p=400$ features and $n_A=n_B=n_C=10,000$, with feature correlations as shown in  Fig.~\ref{fig:toy4}(a). Dataset specific groups of correlated features are highlighted. 
That is, feature connections between 301-350, 201-250 and 351-400 are specific to $X^A$, $X^B$ and $X^C$, respectively.
Note that each dataset has two groups of correlated features that are dataset specific and the aim is to identify these two groups of features individually. We apply DiSC with $d_A=d_B=d_C=20$. Differential features are shown in Fig.~\ref{fig:toy4}(c).
The significance level of the first two differential vectors is almost similar for all the datasets and then significance level drops, as shown in Fig.~\ref{fig:toy4}(b). This indicates that the first two differential vectors capture the major differences. These vectors, shown in Fig.~\ref{fig:toy4}(b), clearly represent the dataset specific connections and each differential vector identifies a group of connected features, unlike diffusion maps in Fig.~\ref{fig:toy4}(d).

 \begin{figure}[th]
     \centering
     \includegraphics[width=0.9\textwidth]{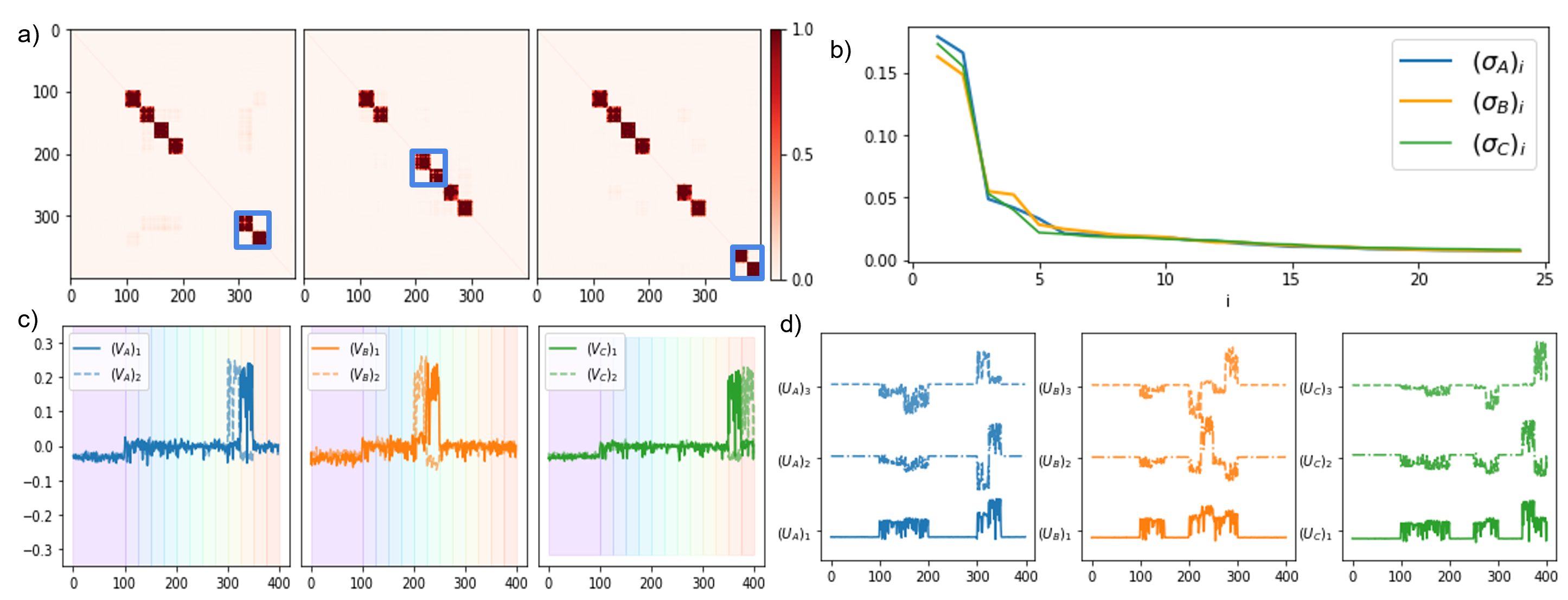}
    \caption{Multiple datasets. a) Correlation matrices. b) Significance levels. c)~ Differential features. d) Diffusion maps.}
     \label{fig:toy4}
 \end{figure}

\section{Additional experimental results}\label{appendix:experiments}
\paragraph{Identifying subsets of connected features in both datasets}
We generate a toy problem with two datasets $X^A$ and $X^B$ with $p=200$ features and $n_A=n_B=10,000$, whose features are correlated as in Fig.~\ref{fig:toy3}(a).
There is a subset of correlated features in $X^A$ that are divided into two groups in $X^B$ and vice versa. $X^A$ has three groups of correlated features with feature indices [1-75],[76-100] and [101-200]. $X^B$ has another three groups of correlated features with feature indices [1-100],[101-125],[126-200]. 
Thus, feature indices [1-100] which have strong connectivity in $X^B$ are divided into [1-75] and [76-100] groups in $X^A$. Similarly, feature indices [101-200] which have strong connectivity in $X^A$ are divided into [101-125] and [126-200] groups in $X^B$. The goal is to identify these sub-divided groups i.e., feature indices [1-75],[76-100] in $X^A$ and  [101-125],[126-200] in $X^B$ as the differential features. Significant differential vectors and the corresponding significance values are shown in Fig.~\ref{fig:toy3}(c) and (b) respectively. The significance level of the differential vector in both datasets is roughly similar and the significance level drops after the first vector. Therefore, the first differential vector is the most significant one. Fig.~\ref{fig:toy3}(c) shows that the differential vectors are clustering the two subsets of differential features.

\begin{figure}
    \centering
    \includegraphics[width=\textwidth]{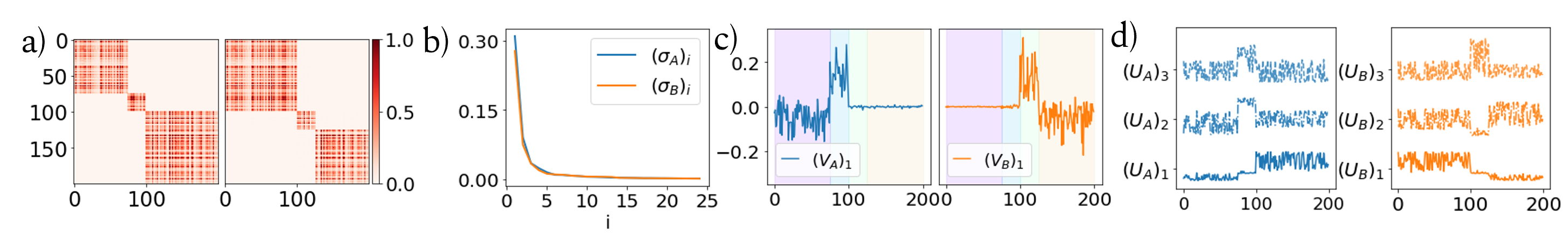}
    \caption{a) Correlation matrices. b) Significance levels c)~ Differential features d) Eigenvectors of diffusion maps.}
    \label{fig:toy3}
\end{figure}

\subsection{Hyperspectral imaging}
We provide two additional results of DiSC on the hyperspectral imaging dataset.
To illustrate the feature grouping capability of DiSC, we choose $X^A=X^{oct}$ and $X^B=X^{oct-c}$, and set the hyperparameters $d_A=d_B=20$. 
The top four differential features of $X^A$ and $X^B$ are shown in Fig.~\ref{fig:hyper_oct_octc} in the top row and bottom row respectively. $(V_B)_3$ captures the tarp and $(V_B)_4$ captures the slight variation in the trees in the background. $(V_B)_1$ and $(V_B)_2$ captures the change in the grass. Clearly, there is a grouping effect and each group of differential features are captured individually. Also, since the entire information in $X^A$ is present in $X^B$,  $V_A$ do not capture any differences.

\begin{figure}
    \centering
    \includegraphics[scale=0.58]{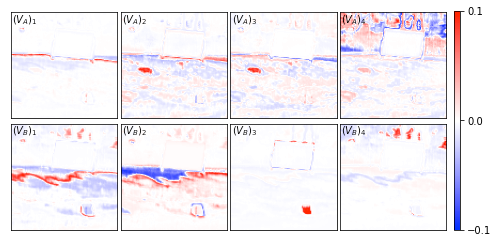}
    \caption{The differential vectors for $X^A=X^{oct}$ (top row) and $X^B=X^{oct-c}$ (bottom row).}
    \label{fig:hyper_oct_octc}
\end{figure}

To evaluate the robustness of our approach to noise, we added Gaussian noise with $\mu=0$ and $\sigma=0.01$ to all datasets. We then compute the differential features using DiSC and distance measure using DM-changing data. We cluster the features using these with k-means clustering and $k$=3. Fig.~\ref{fig:hyper_noise} shows the clusters for four pairs of datasets using DiSC  (top row) and DM-changing data (bottom row). DiSC is more robust to noise than DM-changing data. In most months it groups the pixels belonging to the tarp as a practically separate cluster of differential features, as opposed to DM-changing data which groups the tarp with other features in the background. Thus, DiSC is more invariant to imaging conditions and highlights the object that has changed in the scene.

\begin{figure}
    \centering
    \includegraphics[scale=0.42]{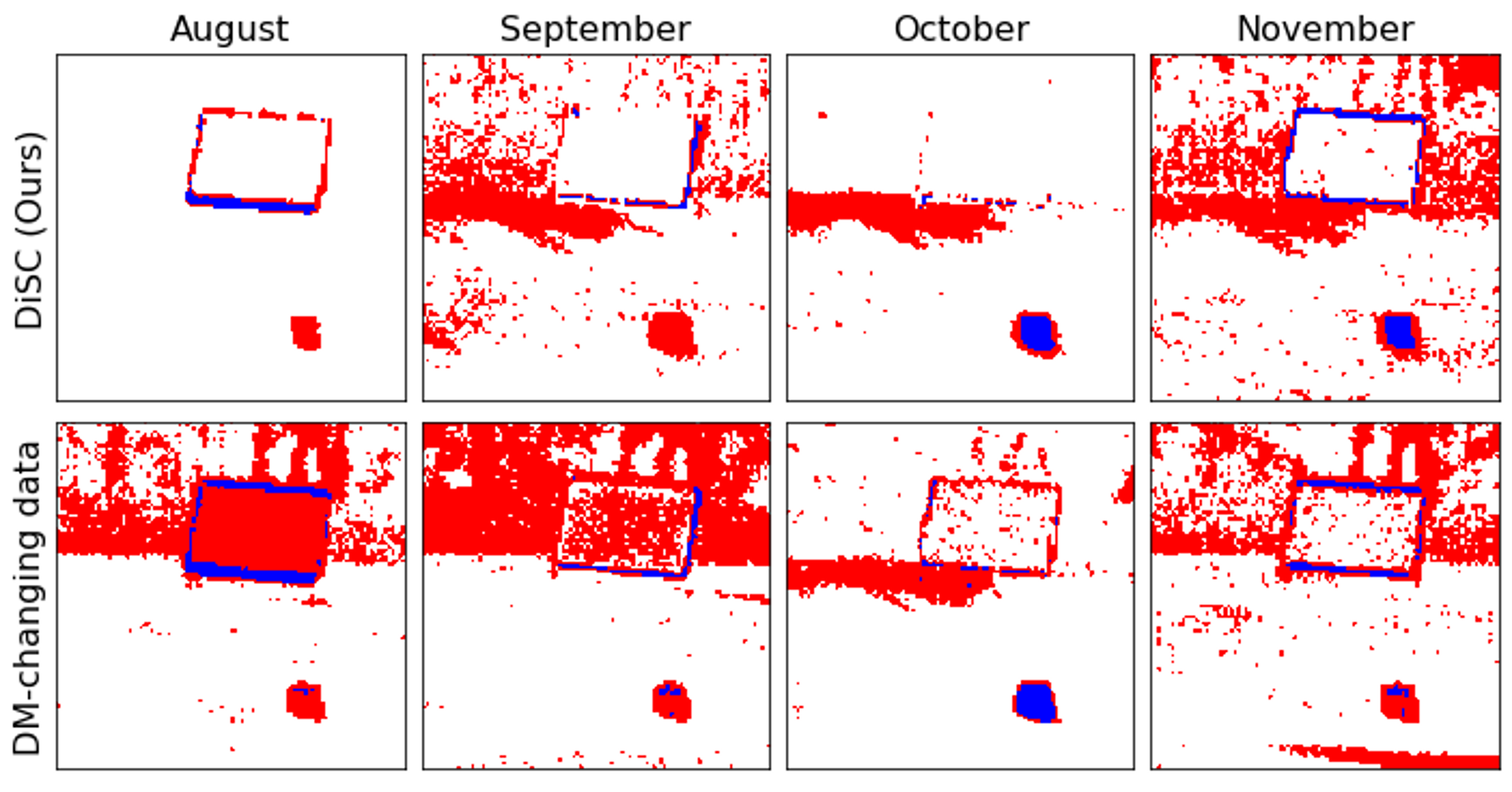}
    \caption{Noisy hyperspectral data. Three groups of features (white, red and blue) formed from DiSC (top row) and Diffusion Maps for changing data (bottom row).}
    \label{fig:hyper_noise}
\end{figure}

\subsection{MNIST pairwise classification} \label{sec:mnist_appendix}
The MNIST dataset \citep{lecun-mnisthandwrittendigit-2010} consists of images of hand written digits from 0 to 9, each of dimension $28\times28$ pixels. It has 60k training samples and 10k testing samples. We extract the differential features between each pair of digits with $d_A=d_B=20$. The top 3 differential features from the two digits are further concatenated columnwise to form a matrix of dimension $784 \times 6$. 
For each pair of digits, we group the features into three clusters using the k-means algorithm applied to this matrix of concatenated differential features. 
We compare our approach with Naive Elastic Net (EN), an Elastic Net variation for classification (EN-logistics) and Diffusion Maps applied to the features of each dataset. 
Each of these methods yields a vector(s) representing feature importance, which is then used for feature grouping  using k-means with $k=3$. As an additional baseline, we group the features based on the entire data. To have a quantitative metric to measure the performance of these methods, we first compute the average feature values for each cluster of features and use these to train a logistic regression classifier between pairs of digits, and measure the classification accuracy on the test samples. 

For each digit, Fig.~\ref{fig:mnist_all} shows the average of its pairwise accuracy with the other digits, for different methods.
Our method consistently performs better than the other methods, with relatively less variability in accuracy across different pairs of digits. The baseline model and diffusion maps have a huge variability, which may be because these are not designed to extract differences between the digits. In most cases, EN-logistic performs better than diffusion maps because EN-logistic assigns feature importance based on the classification task. However, it only has a single set of coefficients for features, whereas our method has a number of differential features with associated significance level. As illustrated in Fig.~\ref{fig:mnist_all} for digits $4$ and $9$, diffusion maps captures the overall structures whereas DiSC explicitly capture their differences.

\begin{figure}
    \centering
    \includegraphics[width=\linewidth]{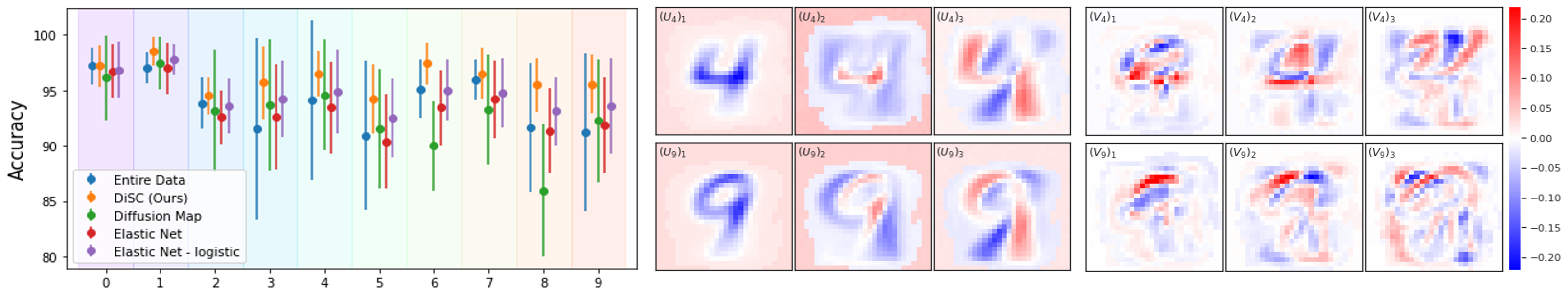}
    \caption{Results for MNIST data. The figure shows the accuracy of various methods for classifying a pair of digits.
    For each digit (x-axis) we show the average accuracy over all pairs.
    }
    \label{fig:mnist_all}
\end{figure}

\subsection{single cell RNA sequencing} 

Using the \textrmit{splatter} simulator \citep{zappia2017splatter} we generated a simulated dataset containing the RNA expression level of $p=500$ genes, as measured in $1000$ cells, which belong to two different cell types. Out of the $500$ genes, only approximately $100$ genes are \textrmit{differential}, and thus have a different expression level for the two cell types. 

We visualize the cells in 2D using t-SNE \citep{van2008visualizing}. 
Figure \ref{fig:scrna_sim} 
colors the cells by \textrmit{meta features}, that are equal to weighted sum over the gene expression profile of each cell. In the upper four panels, the weights are computed by the two leading diffusion vectors of the graphs $G_A$ (left two panels) and $G_B$ (right two panels). 
In the bottom four panels, the weights are equal to the differential vectors $V_A$ (left two panels) and $V_B$ (right two panels). Clearly, the differential vectors highlight genes that are more relevant for differentiating between the two groups.

\begin{figure}[ht]
    \centering
    \includegraphics[scale=0.14]{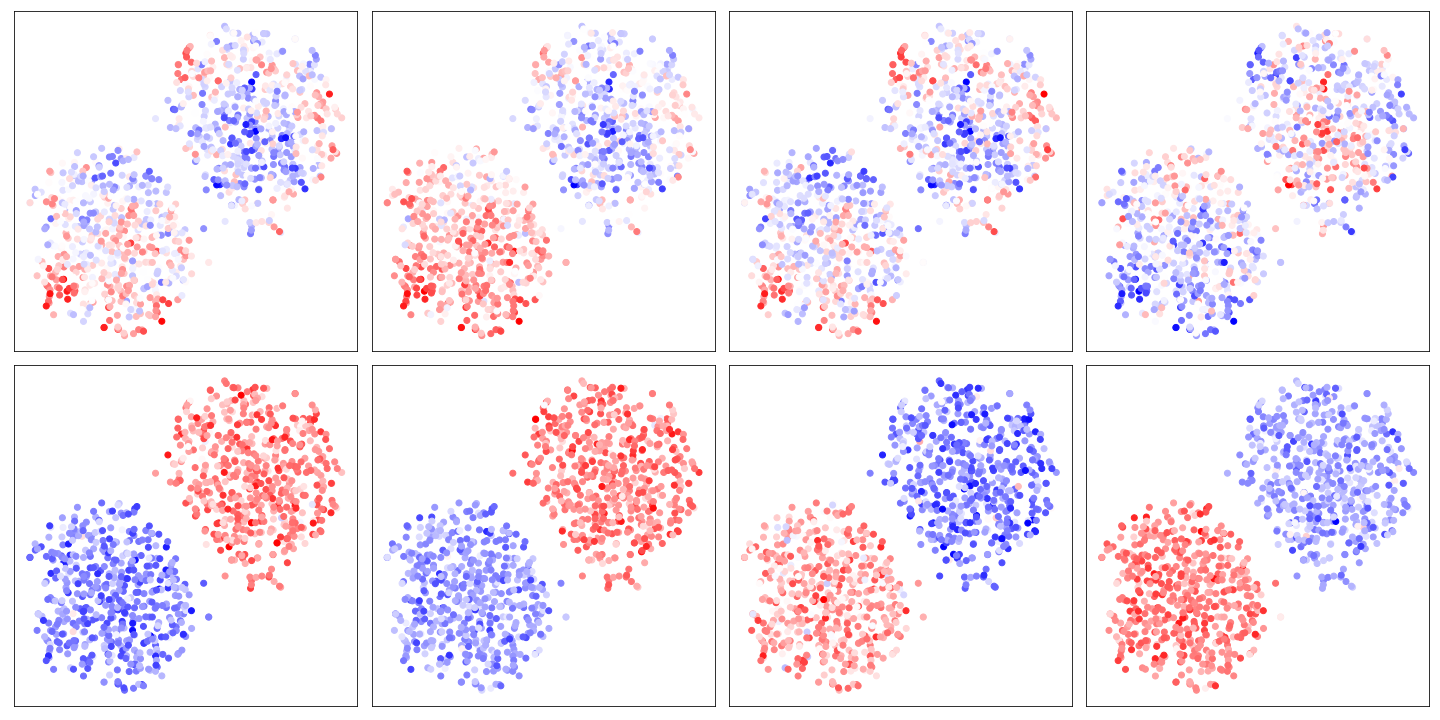}
    \caption{2D t-SNE plots of simulated scRNA-seq generated by splatter \cite{zappia2017splatter}. The two clusters represent two cell types. 
    In all panels cells are colored by \textrmit{meta-features} - a weighted sum on the gene expression, corresponding to diffusion maps on each of the feature graphs separately (top row), or DiSC vectors (bottom row).}
    \label{fig:scrna_sim}
\end{figure}

\subsection{Differential feature extraction in partially correlated conditions} 

\begin{figure}[ht]
   \centering
    \includegraphics[width=0.68\textwidth]{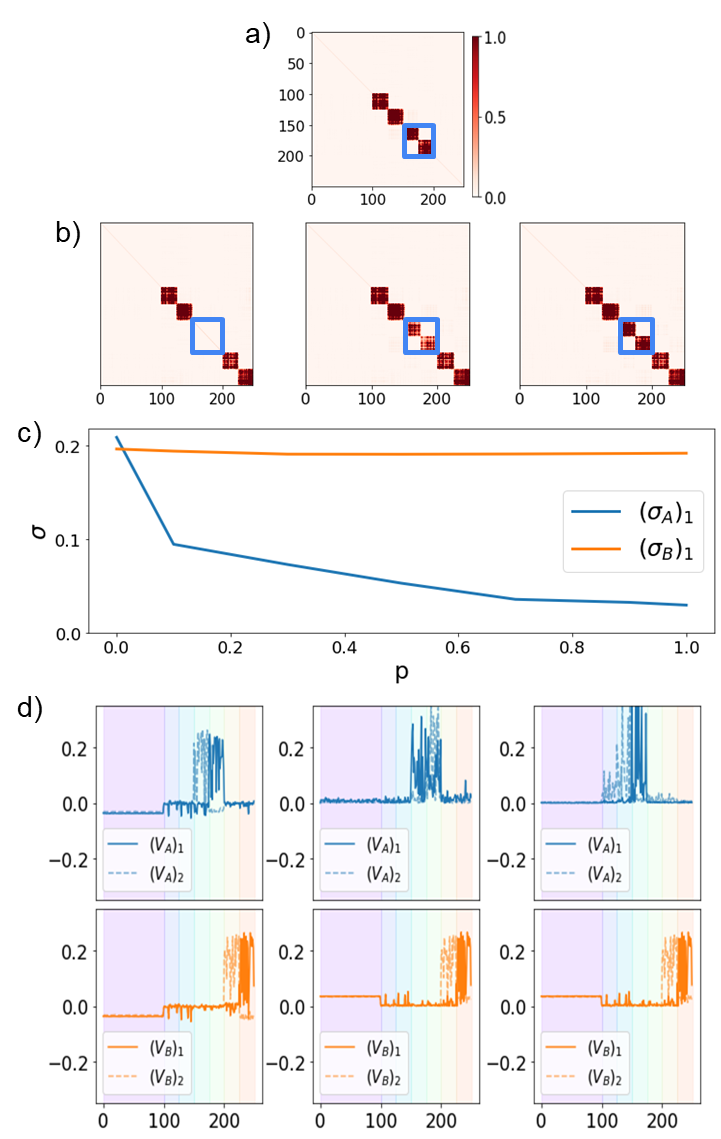}
  \caption{a) feature correlation matrix for dataset $A$. b) feature correlation matrix for dataset $B$ where p=0 (left), p=0.5 (middle) and p=1 (right). c) Significance of the first differential vector of $X^A$ (blue) and $X^B$ (orange). d) First two differential vectors of $X^A$ (top row) and $X^B$ (bottom row) for p=0 (left), p=0.5 (middle) and p=1 (right).}
  \label{fig:non_ideal}
\end{figure} 

We empirically analyze the performance of detecting differential features in a non-ideal case, where, say, correlated components in one dataset are partially correlated in other dataset. In section~\ref{sec:toy_problem}, we compared two ideal settings - features that are highly correlated in one dataset are completely independent in another. Here, we added an extension to this experiment to test non ideal cases. To that end, we introduce a parameter $p$ that determines the correlation level. Specifically, we set the eigenvalues of the covariance to decrease exponentially such that the $i^{th}$ eigenvalue $\lambda_i$ is proportional to $exp(-i*p)$. Thus, if $p$ is close to 0 - the decrease is very small, which results in a covariance matrix close to identity and thus no correlation. If p is high, then the decrease is fast which results in high correlation. In the experiment, we kept the covariance of features 151-200 in $X^A$ fixed with $p = 1$. For these features of $X^B$ we changed the values of $p$ between 0 and 1 and computed the differential vectors and  significance level for each of the $p$ values. As expected the significance level of the differential vectors of $X^A$ decreases as p gets close to 1. For intermediate levels, the significance level is still high.  The results are shown in Figure~\ref{fig:non_ideal}.

\subsection{Iterative baselines on simulated dataset} 
We applied the best subset selection approach to the simulated example in section~\ref{sec:toy_problem}, Identifying newly connected features. Here, our proposed methodology was able to identify features 151-200 and 201-250 as the differential features in $X^A$ and $X^B$ respectively. We applied the iterative feature selection approach to select the top 100 features that best differentiate $X^A$ and $X^B$. For our best subset selection, as a criterion, we used the accuracy of a nearest neighbor (NN) and logistic regression classifiers, that take as input the selected features. The accuracy was very poor compared to our group feature selection method, as shown in figure \ref{fig:iterative_baseline}. The figure on the left (right) indicates the selected features while using NN classifier (logistic regression classifier). The selected features are indicated by value 1. Clearly, these methods do not highlight the groundtruth 151-250 features as the features that best distinguish $X^A$ and $X^B$. 

\begin{figure}[ht]
    \centering
    \includegraphics[scale=0.7]{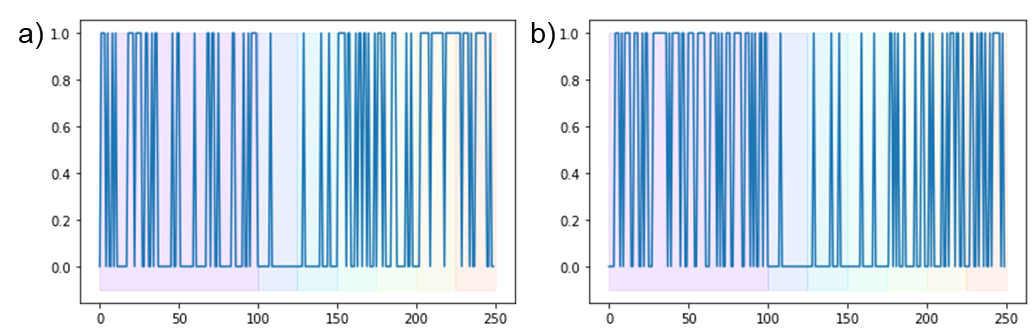}
    \caption{Iterative subset selection approach with NN (a) and logistic regression (b) classification accuracy criterion.}
    \label{fig:iterative_baseline}
\end{figure}

\subsection{Simulations for validating Lemma  \ref{lem:concentation_v_gamma}}\label{sec:validation_theory}
We ran several simulations to validate the theoretical results obtained in the proof of Lemma \ref{lem:concentation_v_gamma}. 
In our simulations, the block size $l$ varied from $500$ to $2000$. We set $s = l^{\alpha}$ for various values of $\alpha$ between $0.6$ and $0.9$. Our goal is to obtain numerical approximation to the rate of convergence of several factors in our proof, as a function of the block size $l$. 
Specifically, we estimate the increase rate of the numerator and denominator of Eq. \eqref{eq:dk_application}. The denominator is equal to $\lambda_3$, the third eigenvalue of $W_B$. By Lemma \ref{lem:expected_wb}, $\lambda_3$ is proportional to $s=l^\alpha$. Figure \ref{fig:theory_denominator} draws $\lambda_3$ as a function of $l$ on a log scale. The theoretical vs. numerical slope value is written over each panel. The numerical value matches the theoretical value almost perfectly. 

Next, we repeat the same experiment to compare the numerical and theoretical increase for the numerator of Eq. \eqref{eq:dk_application}.
Since $\lambda_3$ is proportional to $s$, The theoretical increase of the numerator is $O(l^{0.5} + l^{(3\alpha/2-0.5)})$. Figure \ref{fig:theory_numerator} shows the numerator vs. $l$ 
on a log-log scale. Similarly to Figure \ref{fig:theory_denominator}, for each panel, we write the numerical value of the slope, vs. the theoretical one. As expected, for low values of $\alpha$, the dominant term is $O(l^{0.5})$. For larger values of $\alpha$, the increase is slightly slower than expected from theory, implying that the bound on the numerator can be improved. 

\begin{figure}[ht]
    \centering
    \includegraphics[width = 0.85\textwidth]{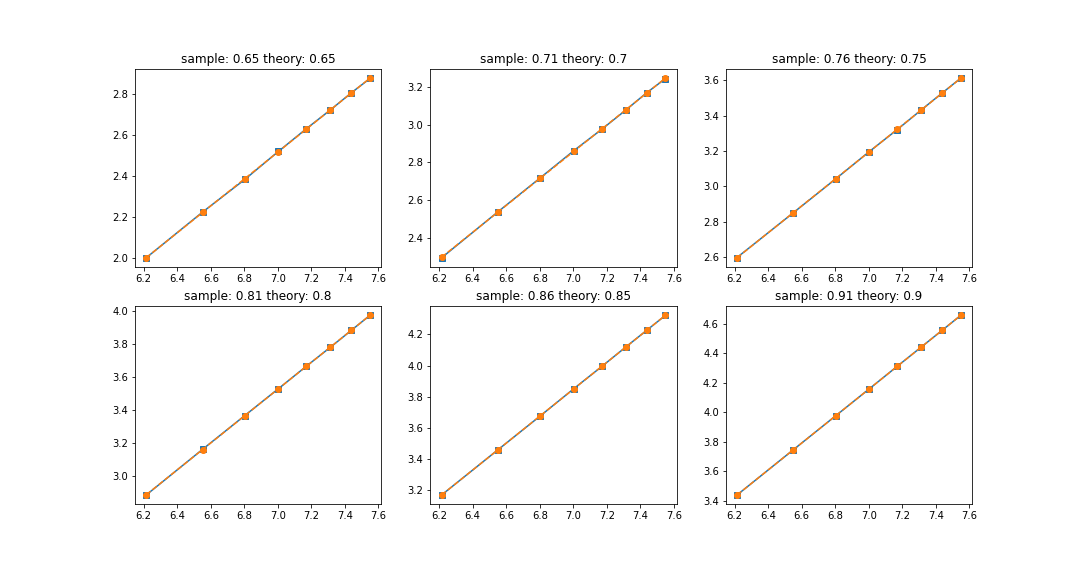}
    \caption{Caption: convergence of the denominator of Eq. \eqref{eq:dk_application}}
    \label{fig:theory_denominator}
\end{figure}
\begin{figure}[ht]
    
    \centering
    \includegraphics[width = 0.85\textwidth]{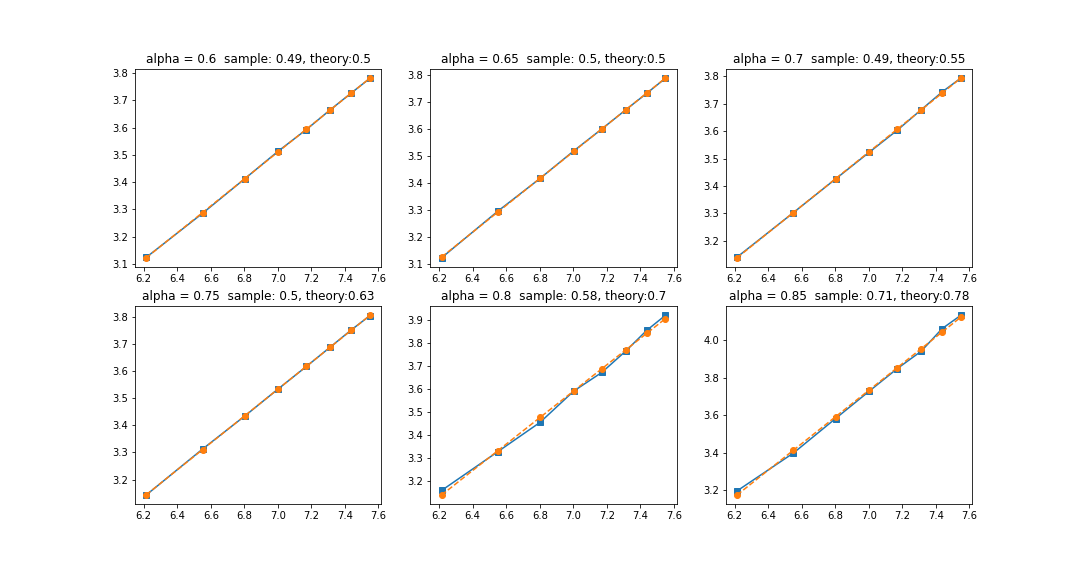}
    \caption{Caption: convergence of the numerator of Eq. \eqref{eq:dk_application}}
    \label{fig:theory_numerator}
\end{figure}

\end{document}